\DeclareMathOperator*{\argmin}{arg\,min}
\DeclareMathOperator*{\argmax}{arg\,max}
\DeclareMathOperator{\vect}{vec}
\newcommand{\sourcecodeurl}{\href{http://github.com/bermanmaxim/probablySubmodular}{\texttt{github.com/\allowbreak bermanmaxim/\allowbreak probablySubmodular}}}
\begin{document}

\title{Discriminative training of conditional random fields with probably submodular constraints}

\author{Maxim Berman \and Matthew B. Blaschko}

\institute{Maxim Berman \at
              Dept.\ of Electrical Engineering, KU Leuven, 3001 Leuven, Belgium \\
              \email{maxim.berman@esat.kuleuven.be}           
           \and
Matthew B. Blaschko \at
              Dept.\ of Electrical Engineering, KU Leuven, 3001 Leuven, Belgium \\
              \email{matthew.blaschko@esat.kuleuven.be}
}

\date{}

\maketitle

\begin{abstract}
Problems of segmentation, denoising, registration and 3D reconstruction are often addressed with the graph cut algorithm. However, solving an unconstrained graph cut problem is NP-hard. For tractable optimization, pairwise potentials have to fulfill the submodularity inequality. In our learning paradigm, pairwise potentials are created as the dot product of a learned vector $w$ with positive feature vectors. In order to constrain such a model to remain tractable, previous approaches have enforced the weight vector to be positive for pairwise potentials in which the labels differ, and set pairwise potentials to zero in the case that the label remains the same. Such constraints are sufficient to guarantee that the resulting pairwise potentials satisfy the submodularity inequality. However, we show that such an approach unnecessarily restricts the capacity of the learned models. 
Guaranteeing submodularity for all possible inputs, no matter how improbable, reduces inference error to effectively zero, but increases model error.  In contrast, we relax the requirement of guaranteed submodularity to solutions that are probably approximately submodular.  We show that the conceptually simple strategy of enforcing submodularity on the training examples guarantees with low sample complexity that test images will also yield submodular pairwise potentials.  Results are presented in the binary and muticlass settings, showing substantial improvement from the resulting increased model capacity.
\keywords{Structured prediction \and Submodularity \and Graphical Models \and Segmentation}

\end{abstract}

\def\w{\mathbf{w}}     
\def\fpsi{\bm{\psi}}   
\def\feat{\bm{\phi}}   
\def\E{\mathcal{E}}    
\def\V{\mathcal{V}}    
\def\ind{\mathbf{1}}   
\def\lab{\mathcal{L}}  
\def\T{\mathcal{S}}    
\def\C{\mathcal{C}}    
\def\fp{d}             
\newcommand{\matr}[1]{\mathbf{#1}}
\def\Co{\C_0}             
\def\Co{\C_2}             
\def\Cf{\C_4}             

\def\Ctrans{$\overline{\Cf}$}
\def\NP{$\mathsf{NP}$}

\section{Introduction}

Multiple problems emerging in computer vision, such as segmentation, denoising, registration and 3D reconstruction, are addressed with Structured Output Support Vector Machines (SSVM) applied to conditional random field (CRF) models. The arising problem of energy minimization in CRFs can be solved by a variety of methods, including loopy belief propagation, alpha-expansion, alpha-beta swap and many others. A majority of energy minimization algorithms require pairwise potentials to fulfill (pairwise) submodular constraints or metric constraints. This requirement places a strong limitation on the family of models that can be employed. 

It is well known from statistical learning theory that the prediction error of a discriminant function can be decomposed into the error resulting from the learning procedure, and the error resulting from the model class~\citep{BartlettetalJASA2006}.  In a structured output setting, such as in learning the parameters of a CRF model, a method may also have error resulting from suboptimal inference.  In this work we explore the tradeoffs resulting from this third source of error, showing that \begin{inparaenum}[(\itshape a\upshape)]
\item increasing model capacity by allowing some test-time potentials to be potentially non-submodular generally improves accuracies over guaranteeing submodularity for all possible inputs and
\item we can bound the probability of a non-submodular constraint occurring at test time with low sample complexity.
\end{inparaenum}
  This latter result indicates that relaxing submodularity constraints to guarantee ``only'' probably submodular potentials is a safe and principled strategy for increasing model capacity and increasing the resulting system accuracy.

\renewcommand{\UrlBreaks}{\do\/}

In this work, we make several fundamental contributions to discriminative learning of CRF models: 
\begin{inparaenum}[(\itshape a\upshape)]
 \item a formulation for learning models with probably submodular cons\-traints,
 \item an algorithm for efficiently generating the most violated submodularity constraint,
 \item the concept of a tradeoff between model error and inference error in CRF training, and
 \item empirical results showing substantial improvement on segmentation and multi-label classification datasets. 
\end{inparaenum}
Source code of the learning algorithms presented here are available for download from \sourcecodeurl .

This paper is an extended version of~\citep{Zaremba2016a}.  \citet{Zaremba2016a} have built on existing frameworks for discriminative CRF training for image segmentation by:
\begin{inparaenum}[(i)]
\item introducing necessary and sufficient conditions to ensure submodularity of the inference process (Equation~\eqref{eq:restrsubmodular}) and the notion of probably submodular constraints (Section~\ref{sec:CRFprobablySubmodular}),
\item developing binary label semantic segmentation applications (Section~\ref{sec:toyseg}), and
\item an optimization scheme for the binary setting (Equation~\eqref{eq:ConstraintMarginsSpeedupTensorFactor} and Algorithm~\ref{alg:QP_naive}). 
\end{inparaenum}
We extend our previous work by adding the following contributions:
\begin{inparaenum}[(i)]
\item an improved presentation of the out of sample behaviour of the probably submodular setting (Section~\ref{sec:SampleComplexity}),
\item an improved training scheme with increased computational efficiency (Section~\ref{sec:DelayedConstraintGenerationNIPSWorkshop}),
\item additional segmentation applications (Sections~\ref{sec:ApplicationPhotonReceptorCells2Dretinal} and~\ref{sec:ApplicationMitochondria}),
\item extension of the framework to the multi-label setting (Sections~\ref{sec:MultiLabelClassificationApplicationSetting} and~\ref{sec:ExperimentsMultiLabelClassification}).
\end{inparaenum}

\subsection{Related work}

Random field models in image segmentation initially employed data independent pairwise terms encoding a relatively simple prior that adjacent pixels were likely to have the same label~\citep{GemanG84,BoykovVZ01}.  The first data dependent pairwise terms proposed in the literature were simple contrast dependent terms with fixed positive weighting, resulting in a guarantee of submodularity~\citep{BoykovJ01}.  In the first applications of structured output support vector machines to the discriminative learning of pairwise terms, only associative potentials were employed, enforced by a single positive constraint~\citep{Anguelov2005}. A later work employed only two positively constrained learned weights: one for a Potts-like term, and one for a contrast dependent term~\citep{SzummerKH08}. 
This simple positivity constraint is sufficient to guarantee submodularity for all possible inputs, but does not give the learning algorithm much capacity to optimize the pairwise terms.  In contrast, we consider here the optimization of hundreds or thousands of pairwise parameters, providing a rich model space for learning informative pairwise potentials.  An alternative approach is to consider only tree structured models~\citep{NowozinGL10}, but this again restricts the model space and disallows potentially helpful model interactions.

In relaxing the constraint set to include models that do not guarantee submodularity for all possible inputs, we develop bounds on the probability of a test time input resulting in a non-submodular potential.  This problem reduces to the problem of estimating the sample complexity of learning a convex cone by an intersection of half spaces.

The approach of bounding the error of an algorithm is closely related to the notion of probably approximately correct (PAC) learning~\citep{Valiant84}.
In analogy to PAC learning, probabilistic bounds have been considered before in the development of inference algorithms for computer vision problems, such as in the development of thresholds for an object detection cascade architecture~\citep{Felzenszwalbcascade}.

Contemporary semantic segmentation methods almost always resort to deep learning, as e.g. the U-Net architecture popular in medical computer vision applications~\citep{UNetMICCAI2015}. 
However, there have been successful integrations of conditional random field structure within deep vision applications~\citep{ChenSchwingICML2015,ChandraEccv2016,chen2017deeplab}, which pave the way to the transfer of our probabilistic improvement to the expressivity of exact CRF models to large-scale vision applications. 
Moreover, learning on hard-coded features or with exact inference remains of relevance in systems with limited data, or where strong robustness guaranties 
are needed, as deep learning-based solutions can be prone to adversarial attacks~\citep{Metzen_2017_ICCV}. 
In this work, we focus on a setting with a fixed feature representation, which allows us to study in a principled fashion the coupling between the features and the model weights. 
This also allows us to frame to problem of probably submodular learning as a framework more general than semantic segmentation, which, in particular, we also apply to multi-label classification problems.

\section{Discriminative training of segmentation models}\label{sec:model}

We consider the learning framework of Structured Support Vector Machines (SSVM) of~\citet{TsochantaridisJHA05}, a large-margin classifier suited to problems with structured input-output spaces, such as image segmentation and multi-label problems \citep{Bakir:2007:PSD:1296180}.

The SSVM learns a function from a general input space $\mathcal{X}$ to an output spaces $\mathcal{Y}$ by mapping an element $x \in \mathcal{X}$ to a solution $y^*$ of the \emph{MAP inference problem} 
\begin{equation}\label{map_problem}
y^* = \argmax_{y\in\mathcal{Y}}{\w^\intercal \fpsi(x, y)}
\end{equation}
where $\vec{w} \in \mathbb{R}^d$ is the weight vector of the model and $\fpsi: \mathcal{X} \times \mathcal{Y} \rightarrow \mathbb{R}^d$ is the joint feature map encoding the joint structure of the input-output space. 

Following the 1-slack margin-rescaling formulation of the SSVM \citep{joachimsml09}, 
the weight vector $\w$ can be discriminatively learned from a training set $\T$ of $n$ training samples $(x_i, y_i)_{i=1\ldots n}$, by minimizing the regularized large-margin objective
\begin{equation}
\min_{\w, \xi}\enskip \frac{1}{2} \lVert \w \lVert ^2 + C \xi \label{SSVMprimal}
\end{equation}
subject to the SSVM constraints
\begin{equation}
{\frac 1 n}
	\w^\intercal
             \sum_{i=1}^n (\fpsi(x_i, y_i)-\fpsi(x_i, \bar y_i))
\geq
{\frac 1 n} \sum_{i=1}^n \Delta(y_i, \bar y_i) - \xi \label{SSVMconstraints}
\end{equation}
for any joint labeling $({\bar{y}_1,\ldots,\bar{y}_n})\in \mathcal{Y}^n$, where the loss $\Delta : \mathcal{Y} \times \mathcal{Y} \rightarrow \mathbb{R}_+$ quantifies the dissimilarity of outputs \citep{joachimsml09}. 
The slack variable $\xi$ in Equation~\eqref{SSVMprimal} represents the average training error over the whole training dataset. 
The regularization constant $C > 0$ controls the trade-off between the $\ell_2$ regularization of $\vec{w}$ and the minimization of the training error.

Despite the exponential number $|\mathcal{Y}|^n$ of constraints in the SSVM (Equation~\eqref{SSVMconstraints}), efficient algorithms, such as the cutting-plane approach~\citep{TsochantaridisJHA05,joachimsml09}, can be used to solve the quadratic program (QP) in Equation~\eqref{SSVMprimal}. 
These algorithms require efficient computation of the \emph{augmented inference max-oracle}

\begin{equation}\label{eq:augmentedMAP}
\argmax_{y\in\mathcal{Y}}{\w^\intercal \fpsi(x_i, y)} + \Delta(y_i, y).
\end{equation}

In this work, we consider the particular application of SSVM of learning the potentials of a pairwise Conditional Random Field (CRF) model \citep{Anguelov2005,SzummerKH08}. 
In this case, 
elements of $\mathcal{X}$ can be represented in terms of graphs.
An $x \in \mathcal{X}$ is associated with vertices $\mathcal{V}_x = (x^k)_{k=1\ldots |\mathcal{V}_x|}$ and edges $\mathcal{E}_x\subseteq \{ \{v_i, v_j \} | (v_i,v_j) \in \mathcal{V}_x \times \mathcal{V}_x \wedge v_i \neq v_j \}  
$. 
An output $y \in \mathcal{Y}$ is a labeling $(y^k)_k \in \mathcal{L}^{|\mathcal{V}_x|}$ of each vertex in the graph, where  $\mathcal{L}$ is the set of labels. The joint features $\fpsi(x,y)$ decompose into unary and pairwise features over the vertices and edges of the graph, such that~\citep{SzummerKH08}

\begin{dmath}\label{eq:binaryenergy}
\w^\intercal \fpsi(x, y) =
\w_{u}^\intercal \sum_{x^k \in \V_x} \fpsi^k(x^k, y^k)
+\w_{p}^\intercal \sum_{(x^k,x^l) \in \E_x} \fpsi^{k,l}(x^k, y^k,x^l,y^l).
\end{dmath}
In this setting, one can see that the MAP inference problem~\eqref{map_problem} corresponds to the traditional problem of energy minimization of the CRF \citep{Lafferty:2001:CRF:645530.655813}, each vertex having a unary energy $-\w_{u}^\intercal \, \fpsi^k(x^k, y^k)$ and each edge a pairwise energy $-\w_{p}^\intercal \fpsi^{k,l}(x^k, y^k,x^l,y^l)$.

In the following and without loss of generality, we write the joint feature maps as Kronecker products~\citep{Magnus95}:
\begin{align}\label{eq:jointfeats}
&\fpsi_u^k(x^k, y^k) = \ind(y^k) \otimes \bm{\phi}_u^k(x^k);\\ &\fpsi_p^{k,l}(x^k, y^k,x^l,y^l) = \ind(y^k) \otimes \ind(y^l) \otimes \bm{\phi}_p^{k,l}(x^k, x^l)
\end{align}
where $\ind: \mathcal{L} \to \{0,1\}^{|\lab|}$ is a one-hot encoding of the labels and $\bm{\phi}_u^k$ and $\bm{\phi}_p^{k,l}$ are unary and pairwise features associated to $x$. 
Similarly, $\w_u$ can be split along this decomposition into unary weights $\w_\alpha$ for every label $\alpha \in \lab$, and $\w_p$ into $\w_{\alpha, \beta}$ for every $\alpha, \beta \in \lab$. 

While for general losses, the augmented inference problem~\eqref{eq:augmentedMAP} can be harder to solve than the MAP inference problem~\eqref{map_problem}, a common choice is to pick a loss that decomposes over the unary energies of the graph \citep{TsochantaridisJHA05,Anguelov2005,SzummerKH08}, i.e.
\begin{equation}
    \Delta(y_i, y) = \sum_{x^k\in\V_x} \delta_k(y_i, y^k).
\end{equation}
Using such a decomposable loss, solving the augmented inference problem~\eqref{eq:augmentedMAP} is equivalent to solving a MAP inference problem~\eqref{map_problem} with modified unary energies. 
In the following section, we discuss how enforcing additional constraints on $\w$ can ensure the tractability of this MAP inference problem.

\section{Submodularity in CRFs and probably submodular constraints}\label{sec:CRFprobablySubmodular}

Solving MAP inference in pairwise CRFs is \NP-hard in general~\citep{barahona1982computational}; however, particular restrictions on the pairwise potentials give rise to efficient algorithms. 
In particular, imposing the 
submodularity condition 
on the pairwise energies of the graph, i.e.\footnote{In the following, we write $\feat_u(x^k)$ and $\feat_p(x^k,x^l)$ as a shorthand for 
$\feat_u^{k}(x^k,x^l)$ and $\feat_p^{k,l}(x^k,x^l)$.}
\begin{equation}
\begin{aligned}
\langle \w_{\alpha\alpha}&, \feat_p(x^k,x^l) \rangle + \langle \w_{\beta\beta}, \feat_p(x^k,x^l) \rangle
\\
&\geq
\langle \w_{\alpha\beta}, \feat_p(x^k,x^l) \rangle + \langle \w_{\beta\alpha}, \feat_p(x^k,x^l) \rangle \label{eq:submodcond}
\end{aligned}
\end{equation}
for every edge $\{x^k, x^l\}\in\E_x$ and every pair of labels $\alpha, \beta\in\lab$, 
leads to tractable inference: 
exact with binary labels (max-flow algorithm), or approximate with strong approximation bounds for tasks with more than two labels~--~for instance with the $\alpha-\beta$ swap algorithm of~\citet{BoykovVZ01}. 
This holds regardless of the unary energies of the graph, hence the conditions for augmented inference are the same.
Submodularity conditions~\eqref{eq:submodcond} can be enforced as constraints on the weight vector~$\w$.
In the following, we detail different sets of constraints that ensures that these inequalities are satisfied.

\paragraph{Definitely submodular constraints}  
We assume positivity of the pairwise features $\feat_p(x^k, x^l) \succcurlyeq \mathbf{0}$. The set of constraints
\begin{equation}\label{eq:restrsubmodular}
\C_1: 
\bigg\{ 
  \begin{aligned}
  \w_{\alpha\alpha} = \w_{\beta\beta} = \mathbf{0} \wedge \w_{\alpha\beta} \preccurlyeq \mathbf{0} \,\wedge\, \w_{\beta\alpha} \preccurlyeq \mathbf{0}
  \\ 
  \forall\,\alpha\!\neq\!\beta \in \lab
  \end{aligned}
\,\bigg\}
\end{equation}
introduced by~\citet{SzummerKH08}, enforces submodularity conditions~\eqref{eq:submodcond} for all inputs $x\in\mathcal{X}$.

It is immediately clear on inspection of Equations~\eqref{eq:submodcond} and~\eqref{eq:restrsubmodular} that this set of constraints is sufficient, but not necessary.  We therefore introduce the relaxed set of necessary and sufficient constraints
\begin{equation}\label{defsubmodular}
\Co: 
\bigg\{ 
  \begin{aligned}
\w_{\alpha\alpha} \succcurlyeq \mathbf{0} \,\wedge\,
\w_{\beta\beta} \succcurlyeq \mathbf{0} \,\wedge\,
\w_{\alpha\beta} \preccurlyeq \mathbf{0} \,\wedge\, \w_{\beta\alpha} \preccurlyeq \mathbf{0}
  \\ 
  \forall\,\alpha\!\neq\!\beta \in \lab
  \end{aligned}
\,\bigg\}
\end{equation}
also enforcing conditions~\eqref{eq:submodcond} to be satisfied for all inputs $x\in\mathcal{X}$.

\paragraph{Probably submodular constraints}
We now make a probabilistic argument, which we make precise in Section~\ref{sec:SampleComplexity}, that we may further relax $\mathcal{C}_2$ to enforce linear constraints on $w$ of the form in Equation~\eqref{eq:submodcond} only for the values of $\feat_p(x_i^k,x_i^l)$ observed in the training data.  We will refer to this set of constraints as
\begin{equation}\label{eq:C4}
\Cf: \!\left\{\rule{0cm}{2.5em}\right. 
   \begin{aligned}
    &\langle \w_{\alpha\alpha}, \feat_p(x_i^k,x_i^l) \rangle + \langle \w_{\beta\beta}, \feat_p(x_i^k,x_i^l) \rangle \\
 &\geq \langle \w_{\alpha\beta}, \feat_p(x_i^k,x_i^l) \rangle + \langle \w_{\beta\alpha}, \feat_p(x_i^k,x_i^l) \rangle \\
 &\hspace{3em}\forall\,\alpha\!\neq\!\beta \in \lab,\ \{x_i^k,x_i^l\}\in\E_{x_i},\ x_i\in\T
  \end{aligned}
  \left\}\rule{0cm}{2.5em}\right.
\end{equation}
enforcing these conditions for the examples $x_i\in\T$ only. 
The key insight that allows us to make this relaxation is that if a function is submodular on the training data, with high probability it will be submodular on the test data (see Section~\ref{sec:SampleComplexity}).  Furthermore, the constraints are linear in $\w$ and our optimization remains a quadratic programming problem, albeit with a large set of constraints.

As a final set of constraints, we slightly restrict $\mathcal{C}_4$ to ensure that pairwise potentials of the same label are negative (i.e.\ favored by the inference procedure), while pairwise potentials of different labels are positive (i.e.\ discouraged by the inference procedure), resulting in the set of constraints
\begin{equation}
\C_3: \!\left\{\rule{0cm}{2.5em}\right. 
   \begin{aligned}
    \hspace{-0.5em}&\langle \w_{\alpha\alpha}, \feat_p(x_i^k,x_i^l) \rangle \geq 0; \langle \w_{\beta\beta}, \feat_p(x_i^k,x_i^l) \rangle \geq 0\\
    \hspace{-0.5em}&\langle \w_{\alpha\beta}, \feat_p(x_i^k,x_i^l) \rangle \leq 0; \langle \w_{\beta\alpha}, \feat_p(x_i^k,x_i^l) \rangle \leq 0 \\
 \hspace{-0.5em}&\hspace{1em}\forall\,\alpha\!\neq\!\beta \in \lab,\ \{x_i^k,x_i^l\}\in\E_{x_i},\ x_i\in\T
  \end{aligned}
  \left\}\rule{0cm}{2.5em}\right..
\end{equation}
This specifies in a loose way prior knowledge about the role of pairwise constraints in image segmentation, while still giving sufficient model capacity to the learning algorithm.

\paragraph{Gain in model capacity.}
We have that $\mathcal{C}_1 \subset \mathcal{C}_2$, i.e. constraints $\C_1$ are tighter than constraints $\Co$. 
One might wonder if using $\Co$ effectively leads to a gain in model capacity over using constraint set $\C_1$. 
MRFs can have equivalent parametrizations, and it could be the case that any weight configuration in $\Co$ can be reparametrized as a weight configuration in $\C_1$. 
For instance, \citet[Sec.~4.1]{KolmogorovZabihPAMI2004} give a reparametrization allowing to reduce a problem with non-zero same-label pairwise potentials 
\begin{align}
-\langle \w_{\alpha\alpha} , \feat_p(x^k, x^l) \rangle  \text{ and } -\langle \w_{\beta\beta} , \feat_p(x^k, x^l)\rangle
\end{align}
to an equivalent pairwise problem where these terms are equal to 0. 
However, such a reparametrization requires that the unary features $\feat_u$ be reparametrized in order to incorporate some of the edge-dependent terms of the energy in the unary potentials. 
Assuming that the unary and pairwise features are fixed, we show that a transformation \emph{of the model weights only} is not sufficient to reduce a model in $\Co$ to a model in $\C_1$. 
In this setting, we define reparametrization as follows:
\begin{definition}

A weight vector $\w_1  \in \mathbb{R}^d$ is \emph{reparametrizable} into $\w_2 \in \mathbb{R}^d$ if both vectors lead to the same MAP solution for every element of the input space:

\begin{equation}
\argmax_{y\in\mathcal{Y}}{\w_1^\intercal \fpsi(x, y)} = \argmax_{y\in\mathcal{Y}}{\w_2^\intercal \fpsi(x, y)}
\end{equation}
for all $x\in\mathcal{X}$.
\end{definition}
Intuitively, a model optimized with constraint set $\Co$ can exploit information contained in the same-label pairwise features, if this information is not also encoded in the unary features, contrary to constraint set $\C_1$. 
This argument is made explicit in the proof of the following proposition.
\begin{proposition}

In general, elements of $\Co$ cannot be reparametrized to elements of $\C_1$.

\end{proposition}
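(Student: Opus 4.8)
The plan is to establish the proposition by exhibiting a single explicit witness, since ``in general'' only requires that \emph{some} element of $\C_2$ fail to be reparametrizable into $\C_1$. I would work in the smallest nontrivial instance: binary labels $\lab=\{\alpha,\beta\}$ and a graph with two vertices $x^1,x^2$ joined by one edge. First I would fix a one-parameter family of inputs $x_t$ sharing identical scalar unary features $\feat_u(x^1)=\feat_u(x^2)=1$ but with pairwise feature $\feat_p(x^1,x^2)=t\ge 0$, relying on the mild and generically satisfied assumption that the feature map admits inputs in which the pairwise feature varies independently of the unaries. This is exactly the regime flagged before the proposition, where same-label pairwise information is not encoded in the unary features.

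The crux of the argument is an invariant of $\C_1$ that I would isolate at the outset. Writing $E(y)=-\w^\intercal\fpsi(x_t,y)$, so that the MAP is the minimizer of $E$, any $\w_2\in\C_1$ has $\w_{\alpha\alpha}'=\w_{\beta\beta}'=\mathbf 0$; hence the two constant labelings carry no pairwise energy and their gap $E(\alpha,\alpha)-E(\beta,\beta)=2(\w_\beta'-\w_\alpha')$ is \emph{independent of $t$}. Consequently no $\C_1$ model can prefer $(\beta,\beta)$ at one value of $t$ and $(\alpha,\alpha)$ at another: its ranking of these two labelings is frozen across the whole family $\{x_t\}$.

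Next I would construct a $\C_2$ witness $\w_1$ that violates precisely this invariant. Taking unaries $\w_\alpha=0,\ \w_\beta=1$, diagonal pairwise weights $\w_{\alpha\alpha}=2,\ \w_{\beta\beta}=0$, and strongly repulsive off-diagonal weights $\w_{\alpha\beta}=\w_{\beta\alpha}=-M$ with $M>0$ large, membership $\w_1\in\C_2$ is immediate from the sign pattern ($\succcurlyeq\mathbf 0$ on the diagonal, $\preccurlyeq\mathbf 0$ off it), while $\w_1\notin\C_1$ since $\w_{\alpha\alpha}\neq\mathbf 0$. A direct comparison of the four labeling energies shows that for $t>0$ the large repulsive term makes the mixed labelings (each of energy $-1+Mt$) suboptimal, so the MAP lies between the two constant labelings, whose energies $E(\alpha,\alpha)=-2t$ and $E(\beta,\beta)=-2$ cross at $t=1$. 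Thus $\w_1$ yields the unique MAP $(\beta,\beta)$ for $t<1$ and the unique MAP $(\alpha,\alpha)$ for $t>1$ --- a flip made possible solely by the degree of freedom $\w_{\alpha\alpha}\neq\w_{\beta\beta}$ forbidden in $\C_1$.

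Finally I would close by contradiction. If some $\w_2\in\C_1$ reparametrized $\w_1$, then equality of the $\argmax$ sets at a sample point $t_a<1$ would force $\{(\beta,\beta)\}$ and hence $\w_\beta'>\w_\alpha'$, while equality at $t_b>1$ would force $\{(\alpha,\alpha)\}$ and hence $\w_\alpha'>\w_\beta'$, a contradiction. The main obstacle I anticipate is not the algebra but the bookkeeping around the \emph{set-valued} $\argmax$: I must verify that the witness has a \emph{singleton} MAP at the chosen $t_a,t_b$ (guaranteed by taking $M$ large and $t_a,t_b$ bounded away from $1$), so that the set equality in the definition of reparametrization yields the strict inequalities that drive the contradiction, and I must state the independence of $\feat_u$ and $\feat_p$ as the single standing assumption.
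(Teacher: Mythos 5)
Your proof is correct and follows essentially the same route as the paper's: a minimal two-vertex, two-label instance in which the pairwise feature varies across inputs while the unaries stay fixed, so that a $\Co$ model can flip its preference between the constant labelings $(\alpha,\alpha)$ and $(\beta,\beta)$ while any $\C_1$ model, having $\w_{\alpha\alpha}=\w_{\beta\beta}=\mathbf{0}$, has a ranking of these labelings that cannot depend on the pairwise feature. If anything, your version is tidier than the paper's own example, since your witness respects the pairwise-feature positivity assumption and the sign pattern of $\Co$, and you handle the set-valued $\argmax$ explicitly.
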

\begin{proof}
As an example, consider a problem with two variables $x^0, x^1$, two labels $\mathcal{Y} = \{a, b\}$, and two samples in $\mathcal{S}$ with ground truth labels $y_0 = (a, a)$ and $y_1 = (b, b)$. 
Consider constant unary features $\feat_u(x^0_0) = \feat_u(x^1_0)= \feat_u(x^0_1) = \feat_u(x^1_1) = 0$ and scalar pairwise features such that $\feat_p(x_0^0, x_0^1) = -1$ and $\feat_p(x_1^0, x_1^1) = 1$. 
A weight vector such that $w_{aa} < 0$ and $w_{bb} > 0$, satisfying $\Co$, will have zero error. 
Any weight vector such that $w_{aa} = 0$ and $w_{bb} = 0$, satisfying $\C_1$, will not differentiate between the two samples, and will therefore yield non-zero error. \qed
\end{proof}
In conclusion, there is a strict increase in model capacity in general when optimizing the model weights with constraint set $\Co$ rather than $\C_1$,  
which we also validate 
in our experiments.

Since $\mathcal{C}_1 \subset \mathcal{C}_2 \subseteq \mathcal{C}_3 \subseteq \mathcal{C}_4$, we strictly increase the model capacity when we move from $\mathcal{C}_1$ to $\mathcal{C}_4$.
$\mathcal{C}_4$ may in the limit reach $\mathcal{C}_2$, but this would require a very unnatural data set to impose such strong constraints.
In all experiments, we observe that $\mathcal{C}_3$ and $\mathcal{C}_4$ are substantially larger than $\mathcal{C}_2$ and that the optimal weight vector achieved by the objective in Equation~\eqref{SSVMprimal} optimized with constraints $\mathcal{C}_3$ lies outside $\mathcal{C}_2$. 
As detailed later, $\mathcal{C}_3$ and $\mathcal{C}_4$ are empirically observed to be distinct and resulting in different optimal $\w$ (Section~\ref{sec:Results}).

\section{Sample Complexity of Probably Submodular Constraints}\label{sec:SampleComplexity}

We consider that our training images $x$ be drawn i.i.d.\ from some probability distribution $p(x)$, an assumption which is already implicit in the regularized risk minimization of the SSVM.  We therefore consider the vector valued random variable $\feat_{x}(x_i^k,x_i^l)$ where $x_i$ is drawn from $p(x)$ and $k$ and $l$ are sampled uniformly.  Our precise task is to determine whether $\mathcal{C}_3$ and $\mathcal{C}_4$ determined by the training sample results in a high probability of the scalar random variable $ \langle \w_{\alpha\alpha}, \feat_x(x_i^k,x_i^l) \rangle + \langle \w_{\beta\beta}, \feat_x(x_i^k,x_i^l) \rangle - \langle \w_{\alpha\beta}, \feat_x(x_i^k,x_i^l) \rangle - \langle \w_{\beta\alpha}, \feat_x(x_i^k,x_i^l) \rangle$ being non-ne\-ga\-ti\-ve, where $w$ satisfies $\mathcal{C}_3$ or $\mathcal{C}_4$, respectively.  We note that $\mathcal{C}_{3}$ and $\mathcal{C}_{4}$ are both convex cones as they are the intersection of half-spaces that intersect the origin. 
Here, we consider a conservative bound by noting that a convex cone enclosing the data is strictly larger than its convex hull and therefore the integral of the probability measure outside the convex cone is strictly smaller than the integral of the probability measure outside the convex hull (Figure~\ref{fig:convexHullConvexCone}).
\begin{figure}[ht]
  \centering
\includegraphics[width=0.7\linewidth]{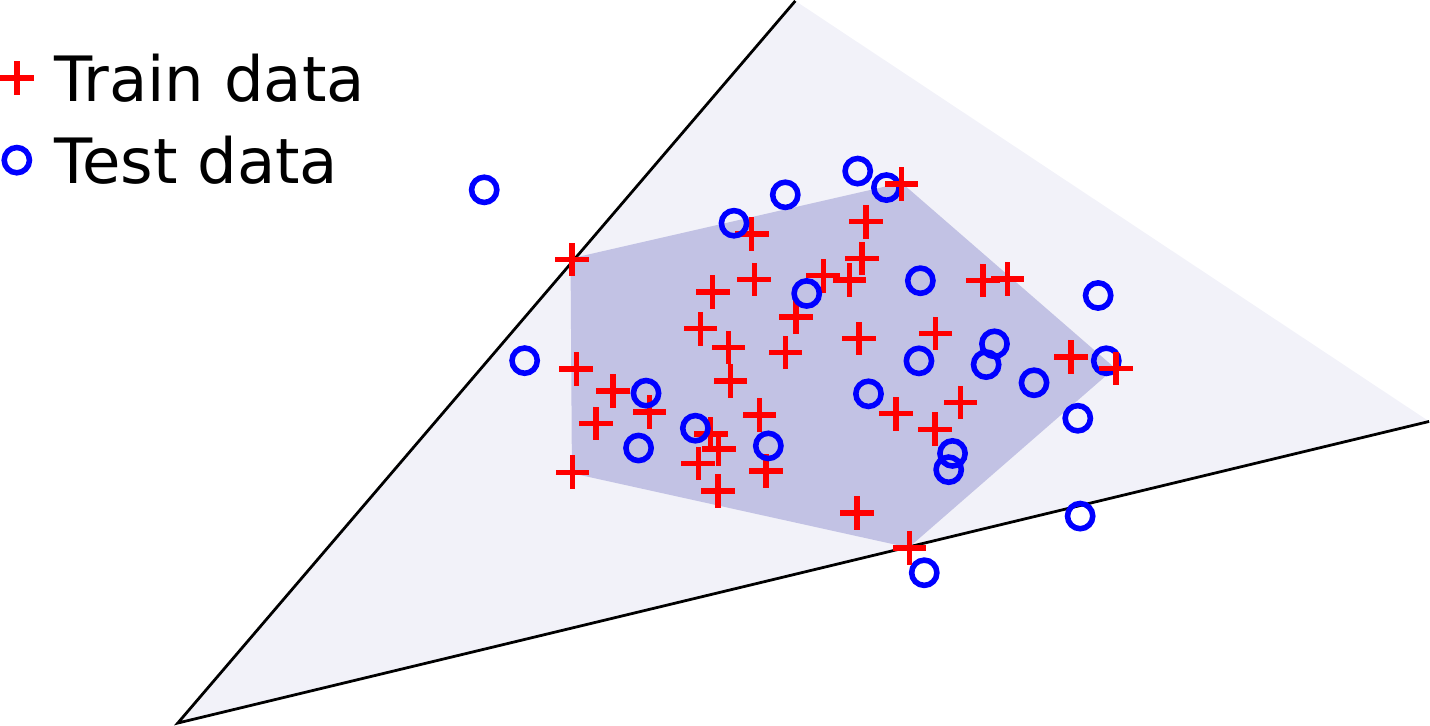} 
  \caption{The probability of a test pairwise potential being submodular can be reduced to the question of the sample complexity of learning convex cones.
To bound the probability of a random variable landing outside the convex cone defined by the training data, we use a bound on the probability of the random variable landing outside the convex hull.  This is sufficient to bound the sample complexity of probably submodular constraints as defined in Section~\ref{sec:model}.}
  \label{fig:convexHullConvexCone}
\end{figure}

\begin{proposition}\label{thm:VolConvHullMonotonic}
As $n\rightarrow \infty$, the expected probability that a test point lies in the convex hull of the training data goes to $1$. 
\end{proposition}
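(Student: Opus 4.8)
The plan is to rewrite the quantity of interest as a single probability over both the training sample and the test point, establish monotonicity in $n$, and then identify the limit. Writing $z_i := \feat_p(x_i^k,x_i^l)$ for the feature vectors arising from i.i.d.\ draws $x_i\sim p$ (with $k,l$ sampled uniformly), and $z$ for the analogous feature vector of an independent test point, the expected (over training sets) probability that the test point falls in the convex hull is, by Fubini,
\[
f(n) := \mathbb{E}_{z_1,\dots,z_n}\big[\Pr\nolimits_z(z\in\mathrm{conv}(z_1,\dots,z_n))\big] = \Pr(z\in K_n),\qquad K_n:=\mathrm{conv}(z_1,\dots,z_n),
\]
where on the right all of $z,z_1,\dots,z_n$ are i.i.d. I want to show $f(n)\to 1$.

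First I would prove that $f$ is non-decreasing. Couple all the experiments on one probability space by drawing an infinite i.i.d.\ sequence $z_1,z_2,\dots$ together with $z$. Since adjoining a point can only enlarge a convex hull, $K_n\subseteq K_{n+1}$, so the events $A_n:=\{z\in K_n\}$ are increasing and $\mathbf{1}(z\in K_n)\le \mathbf{1}(z\in K_{n+1})$ pointwise; taking expectations gives $f(n)\le f(n+1)$. As $f$ is bounded by $1$ it converges to some $L\le 1$, and by continuity of probability along the increasing events $A_n$,
\[
L=\lim_{n\to\infty}\Pr(A_n)=\Pr\Big(z\in\textstyle\bigcup_{n}K_n\Big)=\Pr\big(z\in\mathrm{conv}(\{z_i\}_{i\ge 1})\big)=:\Pr(z\in K_\infty),
\]
using that any convex combination involves only finitely many points, so $\bigcup_n K_n=\mathrm{conv}(\{z_i\}_{i\ge1})$.

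It remains to show $L=1$, which is the crux. Let $K:=\overline{\mathrm{conv}(\mathrm{supp}(p))}$ be the closed convex hull of the support of the feature law (full-dimensional whenever $p$ is absolutely continuous). Almost surely the sample $\{z_i\}_{i\ge1}$ is dense in $\mathrm{supp}(p)$, hence $\overline{K_\infty}=K$; and for a convex set the interior of its closure equals its interior, so $\mathrm{int}(K)=\mathrm{int}(\overline{K_\infty})=\mathrm{int}(K_\infty)\subseteq K_\infty$ almost surely. Consequently $L\ge \Pr(z\in\mathrm{int}(K))$. Because $z\in\mathrm{supp}(p)\subseteq K$ with probability one we have $\Pr(z\notin K)=0$, so $L\ge 1-\Pr(z\in\partial K)$. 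The boundary of a full-dimensional convex body in $\mathbb{R}^d$ has Lebesgue measure zero; hence whenever the feature law is absolutely continuous (equivalently, places no mass on $\partial K$), $\Pr(z\in\partial K)=0$ and $L=1$.

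The main obstacle — and the only place a hypothesis is truly needed — is this last step: the limit is genuinely $1$ only when the test point almost never lands on $\partial K$. This can fail for degenerate laws (e.g.\ a distribution supported on a sphere, for which every sample is an extreme point and the test point lies on $\partial K$ almost surely), so some regularity such as absolute continuity of $\feat_p$ is required. As a cross-check on the rate, an exchangeability argument gives the exact identity $1-f(n)=\mathbb{E}[V_{n+1}]/(n+1)$, where $V_{n+1}$ is the number of extreme points of $\mathrm{conv}(z_1,\dots,z_{n+1})$: each of the $n+1$ exchangeable points is equally likely to be extreme, and $z_{n+1}$ is extreme iff it lies outside $K_n$. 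The conclusion $f(n)\to1$ is then equivalent to the expected number of convex-hull vertices growing sublinearly in $n$ — a standard but distribution-dependent fact that the direct argument above sidesteps.
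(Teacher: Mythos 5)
Your proof is correct (under the regularity hypothesis you flag) and takes a genuinely different---and more careful---route than the paper's. The paper also begins with monotonicity, showing that $\mathbb{E}\left[ \operatorname{vol}_p(\operatorname{conv}(X_1,\dots,X_n)) \right]$ increases in $n$, but then finishes by contradiction: if the limit were below $1$, ``there is some portion of the space with non-zero measure that gets sampled with probability zero.'' You instead identify the limit exactly, via continuity along the increasing events $\{z\in K_n\}$, as $\Pr\left(z\in\operatorname{conv}(\{z_i\}_{i\geq 1})\right)$, relate that hull to the closed convex hull $K$ of the support, and isolate precisely where an assumption is needed: the limit equals $1$ exactly when the feature law puts no mass on $\partial K$ (e.g.\ under absolute continuity).

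This precision is not pedantry; it exposes a real gap in the paper's argument rather than in yours. Your sphere example shows the proposition as stated is false for singular laws (there $f(n)=0$ for every $n$), and it pinpoints the flaw in the paper's contradiction step: a point can lie in the support---hence be sampled---and still lie outside the convex hull of all other samples, namely whenever it is an extreme point of the support. ``Outside the limiting hull'' does not imply ``sampled with probability zero,'' so the contradiction does not follow without a non-degeneracy assumption, which the paper leaves implicit and you make explicit. What each approach buys: the paper's argument is short and adequate for the intended application provided one grants that the pairwise-feature distribution is non-degenerate; yours is rigorous, gives the limit in closed form, and your exchangeability identity $1-f(n)=\mathbb{E}[V_{n+1}]/(n+1)$ ties the convergence rate to the expected number of hull vertices, nicely complementing the $\Omega\left(\log^{d-1} n / n\right)$ polytope rate the paper quotes after the proposition.
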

\begin{proof}
Let $\{X_i\}_{1\leq i < \infty}$ denote a set of random variables drawn i.i.d.\ from a probability distribution $p$.  We denote the volume of the convex hull of a sample taken with respect to measure $p$ as 
\begin{equation*}
\operatorname{vol}_p(\operatorname{conv}(X_1,\dots,X_n)).  
\end{equation*}
We have that
\begin{align}
    \operatorname{conv}(X_1,\dots,X_n) \subseteq \operatorname{conv}(X_1,\dots,X_{n+1}) \implies \\
    \mathbb{E}_{X_1,\dots,X_n \sim p} \left[ \operatorname{vol}_p(\operatorname{conv}(X_1,\dots,X_n)) \right] < \quad \nonumber \\
    \mathbb{E}_{X_1,\dots,X_{n+1} \sim p} \left[ \operatorname{vol}_p(\operatorname{conv}(X_1,\dots,X_{n+1})) \right].
\end{align}
The inequality is strict as we take the expectation of samples from $p$, and for non-trivial distributions,  $\operatorname{conv}(X_1,\dots,X_n)$ will not be equal to the support of $p$ with some probability strictly greater than zero. 

Now that we have shown monotonicity, assume that 
\begin{equation*}
\lim_{n\rightarrow \infty} \mathbb{E}_{X_1,\dots,X_n \sim p} \left[ \operatorname{vol}_p(\operatorname{conv}(X_i,\dots,X_n)) \right] < 1 .
\end{equation*}
By the definition of the volume taken with respect to measure $p$ \citep{billingsley1995probability}, this indicates that there is some portion of the space with non-zero measure that gets sampled with probability zero, a contradiction.
\qed
\end{proof}
To the best of our knowledge, finite sample bounds are not known for arbitrary distributions, but have been studied, e.g.\ for uniform distributions over polytopes \citep{DBLP:journals/corr/abs-1111-5340} in which 
\begin{dmath}
    \mathbb{E}_{X_1,\dots,X_n \sim p} \left[ 1-\operatorname{vol}_p(\operatorname{conv}(X_1,\dots,X_n)) \right] = \Omega\left(\frac{\log^{d-1} n}{n}\right) 
\end{dmath}
$d$ being the dimensionality of the polytope. 
Proposition~\ref{thm:VolConvHullMonotonic} indicates that the constraints being satisfied on the training data will result in the constraints being satisfied on the test data with high probability given sufficient data.

\section{Application settings}
In our experiments, we specialize our framework to two particular structured prediction tasks: semantic segmentation and multi-label classification.
In the following subsections, we detail the construction of the graph and features corresponding to these two tasks.

\subsection{Semantic segmentation}

We apply our approach to semantic segmentation problems.
In order to reduce the complexity of the semantic segmentation, we commonly apply a first image segmentation algorithm to segment the image or volume into superpixels (or supervoxels). 
Each image $x$ is thus decomposed into superpixels $x^k$, $k=1\ldots P$, each of which must be mapped to a label $y^k \in \mathcal{L}$ indicating which class the superpixel belongs to.

We frame this problem as a structured prediction problem: each superpixel is represented by a vertex in the input graph $\V_x$, and the edges $\E_x$ link all pairs of superpixels that are adjacent to eachother. 
The unary features $\feat_u(x^k) \in \mathbb{R}^d$ correspond to image descriptors extracted at each of the superpixels. 
The pairwise features $\feat_p(x^k, x^l)$ are chosen to be a function $\mathbf{Q}(\feat_u(x^k), \feat_u(x^k))$ of the descriptors of the two neighbouring superpixels.
In particular, we use the element-wise absolute differences between features in our experiments, such that $\phi_p(x^k, x^l)^i = |\phi_u(x^k)^i - \phi_u(x^k)^i|$ for $i = 1 \ldots d$.

\subsection{Multi-label Classification}\label{sec:MultiLabelClassificationApplicationSetting}

We evaluate our approach on multi-label classification tasks. 
For this problem, multiple classes can be assigned to each example. 
Such tasks constitute non-trivial structured learning problems, and are good test-bed for the study of structured learning algorithms with inexact inference~\citep{Finley2008TrainingSS}. 

The classification problem consists in learning a function which maps inputs represented as a vector of $d$-dimensional attributes $\vec{x} \in \mathbb{R}^d$ to binary vectors $\vec{y} \in \{0, 1\}^{|\mathcal{C}|}$, indicating the presence or absence of each class, given a set of classes $\mathcal{C}$. This reduces the problem to a binary structured prediction problem. 
To each input $\vec{x} \in \mathbb{R}^d$, we associate an indirected fully-connected graph with $|\mathcal{C}|$ vertices, where each binary output label $y^k$ corresponds to a vertex $x^k \in \mathcal{V}_x$.

We assign unary and pairwise features specific to each vertex $v^k$ and edges $\{x^k, x^l\} \in \mathcal{E}_x$ in the graph by setting
\begin{align}
\feat_u(x^k) &= \ind(x^k) \otimes \vec{x};\\
\feat_p(x^k, x^l) &= \bm{2}(\{x^k, x^l\}) \otimes \vec{R}(\vec{x})\label{eqmultilabelR}
\end{align}
where $\ind: \mathcal{V}_x \to \{0,1\}^{|\mathcal{C}|}$ is a one-hot encoding of the vertices, $\bm{2}: \mathcal{E}_x \to \{0,1\}^{|\mathcal{E}_x|}$ a one-hot encoding of the edges, and $\vec{R}: \mathbb{R}^d \rightarrow \mathbb{R}^e$ extracts an edge feature vector from $\vec{x}$. 
This specifies the unary and pairwise features and allows us to use our probably submodular framework. 
The resulting SSVM model has unary weights $\w_u$ of dimension $2 \,\lvert\mathcal{C}\rvert\, d$, and pairwise weights $\w_p$ of dimension $2\, \lvert\mathcal{C}\rvert \,(\lvert\mathcal{C}\rvert - 1)\, e$ for a fully-connected graph structure.

\section{Efficient constraint generation}\label{sec:constgen}
The tractability constraints in sets $\Co, \Cf$ can be written as hard linear constraints $\mathbf{c}^\intercal \w \geq \mathbf{0}$. 
As such, we can incorporate them in the QP optimization \eqref{SSVMprimal}. 
However, $\Cf$ comprises $(|\lab|\cdot(|\lab| - 1)/2)\cdot|\E|\cdot|\T|$ constraints; even for moderately sized binary segmentation tasks with limited connectivity on small datasets, this large amount cannot be handled by QP solvers. We address this problem in a cut approach; the most violated constraints are iteratively added to the $\w$-update (QP solver) subroutine of the SSVM until all constraints are satisfied, leading experimentally to a small, manageable, number of constraints added to the QP at any learning iteration.

Noting $\fp$ the dimension of the pairwise features, let $\matr{P}$ be the $|\E|\cdot|\T|\times\fp$ matrix of all pairwise feature vectors in the training data and $\matr{B}$ the $|\lab| (|\lab|-1)/2\times |\lab|^2$ matrix of rows
\begin{multline}
(\ind(\alpha) \otimes \ind(\alpha))^{\intercal} + (\ind(\beta) \otimes \ind(\beta))^{\intercal} \\
- (\ind(\alpha) \otimes \ind(\beta))^{\intercal}   - (\ind(\beta) \otimes \ind(\alpha))^{\intercal} 
\end{multline}
for all labels $\alpha \neq \beta$. The constraints in $\Cf$ take the form $(\matr{B}\otimes \matr{P})\w_p \geq 0$. Because of the large number of constraints, computing the constraints margins after each $\w-update$ takes a significant amount of computation time. We detail in the following different ways to reduce the computational impact of this computation.

\paragraph{Tensor factorization} The complexity of computing the constraint margins $(\matr{B}\otimes \matr{P})\w_p$ can be reduced by observing that 
\begin{equation}\label{eq:ConstraintMarginsSpeedupTensorFactor}
(\matr{B}\otimes \matr{P})\w_p = \vect{\matr{P}\matr{\tilde{\w}}_p \matr{B}^\intercal}
\end{equation}
with $\matr{\tilde{\w}}_p$ a matrix constructed such that $\vect{\matr{\tilde{\w}}_p} = \w_p$. The computation of the right-hand side $\matr{V} \coloneqq \matr{P}(\matr{\tilde{\w}}_p \matr{B}^\intercal)$ saves a factor of $|\lab|^2$ operations for computing all the constraint margins.
The resulting $\w$-update subroutine of the SSVM is presented in Algorithm~\ref{alg:QP_naive}.

\SetKwFor{Loop}{Loop}{}{EndLoop}

\IncMargin{1em}
\begin{algorithm}[ht]

\SetKwInOut{Input}{input}\SetKwInOut{Output}{output}
\SetKwComment{Comment}{\quad// }{}
\SetCommentSty{itshape}
\newlength{\commentsnaiveone}
\newlength{\commentsnaivetwo}
\setlength{\commentsnaiveone}{9.5em}
\setlength{\commentsnaivetwo}{14em}

\Input{SSVM constraints $\C^{(t)}$ at iteration $t$; 
constraint matrices $\matr{P}, \matr{B}$; 
current $\w$ 
}
\Output{optimal $\w^*$ satisfying $\C^{(t)}$ and submodular constraints}
\BlankLine

\Loop{\label{alg:QP_naive:inner_loop}}{
$(\w^*, \xi^*) \gets \argmin_{(\w, \xi)} \ \{\lVert \w \lVert^2/2 + C \xi\} \enskip $\\ s.t. $(\w,\xi)\in\C^{(t)}$\Comment*[r]{\makebox[\commentsnaiveone][l]{QP solver}}

$\matr{W} \gets \matr{\tilde{\w}}_p \matr{B}^\intercal$\\
$\matr{V} \gets \matr{P}\matr{W}$\\
$(i, j) \gets \argmin{\matr{V}}$\\
\lIf(\Comment*[f]{\makebox[\commentsnaivetwo][l]{all bounds positive}}){$V_{i, j} \geq 0$}{\Return $\w^*$}
\lElse{$\C^t = \C^t \cup \{\text{constraint $(\mathbf{b}_j \otimes \mathbf{p}_i)\w_p \geq 0$}\}$ \Comment*[f]{\makebox[\commentsnaivetwo][l]{most violated constraint}}
}
}
\caption{$\w$-update subroutine of probably submodular SSVM\label{alg:QP_naive}}
\end{algorithm}\DecMargin{1em}

\subsection{Delayed constraint generation}\label{sec:DelayedConstraintGenerationNIPSWorkshop} 
Even with this acceleration, computing the $\lvert\E\rvert \lvert\T\rvert\times|\lab|(|\lab|-1)/2$ matrix $\matr{V}$ still requires $\mathcal{O}(|\E|\cdot|\T|\cdot|\lab|^2\cdot d)$ operations. On a small-sized problem with $10^3$ edges, $200$ images, $2$ labels and pairwise features of dimension $500$, this results in $400$ million floating point operations for updating the hard constraints margins after each update of the weight vector, which significantly impacts the learning time -- as illustrated by our experiments.

To address this issue, we use a delayed constraint generation approach. The key observation is that in later learning iterations, the optimal weight vector $\w$ does not change drastically. Constraints corresponding to a high positive margin in $\matr{V}$ are therefore likely to stay enforced after updating $\w$. Formally, for each probably submodular constraint $c: \mathbf{c}^\intercal \w \geq 0$, we introduce a lower bound on the margin $l_c \leq \mathbf{c}^\intercal \w$. After a weight update $\w \rightarrow \w'$, we have
\begin{equation}\label{cauchyschwartz}
 \mathbf{c}^\intercal \w'
 = \mathbf{c}^\intercal \w + \langle \w' - \w, \mathbf{c} \rangle 
 \geq l_c-\|\w'-\w\| \cdot \|\mathbf{c}\|;
\end{equation}
by application of the Cauchy-Schwartz inequality. Therefore the update 
$
l_c \rightarrow l_c' = l_c-\|\w'-\w\| \cdot \|\mathbf{c}\|
$ 
yields a correct new lower bound. We can safely save computations by avoiding the updating of constraint margins that are lower-bounded by a positive value.

As before, we write the operations in matrix form. We store the norm of all constraints, and the margin of lower bounds (initialized to $-\infty$), in two matrices $\matr{N}$ and $\matr{L}$ of same size as $\matr{V}$. 
By storing the results of margin computations in $\matr{L}$, raising the bound to the actual value, we avoid referring to $\matr{V}$ altogether. Algorithm~\ref{alg:QP_fast} presents the resulting algorithm integrated in the $\w$-update subroutine, called at each iteration $t$ of the SSVM. 

\IncMargin{1em}
\begin{algorithm}[ht]
\SetKwInOut{Input}{input}\SetKwInOut{Output}{output}
\SetKwComment{Comment}{\quad// }{}
\SetCommentSty{itshape}
\newlength{\commentsone}
\newlength{\commentstwo}
\setlength{\commentsone}{7em}
\setlength{\commentstwo}{7em}
\Input{SSVM constraints $\C^{(t)}$ at iteration $t$; 
constraint matrices $\matr{P}, \matr{B}, \matr{N}$; 
current $\w$, current bounds $\matr{L}$ 
}
\Output{optimal $\w^*$ satisfying $\C^{(t)}$ and submodular constraints; new bounds $\matr{L}$}
\BlankLine

\Loop{\label{alg:QP_fast:inner_loop}}{
$(\w^*, \xi^*) \gets \argmin_{(\w, \xi)} \ \{\lVert \w \lVert^2/2 + C \xi\}$ \\ \hspace{3em} s.t. $(\w,\xi)\in\C^{(t)}$\Comment*[r]{\makebox[\commentsone][l]{QP solver}}

$\matr{L} \gets \matr{L} - \| \w^* - \w\| \cdot \matr{N}$ \Comment*[r]{\makebox[\commentstwo][l]{update bounds}}
$\matr{W} \gets \matr{\tilde{\w}}_p \matr{B}^\intercal$\\
\For{(i,j) such that $L_{i,j}\leq 0$}{
$L_{i,j} \gets \sum_k P_{i,k} W_{k,j}$ \Comment*[r]{\makebox[\commentstwo][l]{compute margins}}
}
\BlankLine
$(i, j) \gets \argmin{\matr{L}}$\\
\lIf(\Comment*[f]{\makebox[\commentstwo][l]{all bounds positive}}){$L_{i, j} \geq 0$}{\Return $\w^*, \mathbf{L}$}
\lElse{$\C^t = \C^t \cup \{\text{constraint $(\mathbf{b}_j \otimes \mathbf{p}_i)\w_p \geq 0$}\}$ \Comment*[f]{\makebox[\commentstwo][l]{most violated constraint}}
}
}
\caption{Accelerated $\w$-update subroutine of probably submodular SSVM\label{alg:QP_fast}}
\end{algorithm}\DecMargin{1em}

\paragraph{Pretraining} In initial iterations of the learning procedure, $\w$ changes significantly and most of the constraints have to be recomputed. To mitigate this, we can use a two-stage learning. First, the weights and dual variables of the SSVM are trained until convergence with no submodular constraints -- resulting in an inexact truncated graph-cut inference. Second, we enfore submodularity with the above approach, with exact inference. The SSVM converges to the same global optimum, but the pretraining warm-starts the exact learning closer to convergence.

\paragraph{Constraint minibatches} The 2-stage \emph{pretraining} strategy can be generalized to an $n$-stage learning approach, where batches of constraints are added sequentially, in order to balance the computation time between convergence to the objective of the SSVM and computation of margins. We experiment with this approach in the experimental section.

\section{Experiments and results}\label{sec:Results}

\subsection{Segmentation on TU Darmstadt dataset\label{sec:toyseg}}
We evaluate the computational gains of our method on a 10-fold cross validation of the 111 images of 
the TU Darmstadt cows dataset provided by~\cite{tucows}.
The images are first oversegmented into $\sim 500$ SLIC superpixels~\cite{slicsuperpix} with a compactness parameter $m=20$ and a prior smoothing with a gaussian kernel of width $\sigma=2.0$ pixels. We use 3-channel color histograms with $5^3 = 125$ bins as unary features $\feat_u(x^k)$ for every superpixel $k$, and absolute differences of histograms as pairwise features for adjacent superpixels. 

Our inference uses graph-cuts optimization \citep{BoykovVZ01} which is exact in the case of submodular potentials in this binary setting. The dynamic introduction of submodular constraints before each step of loss-augmented inference, as described in Section~\ref{sec:constgen}, ensures that the CRF potentials remain submodular over the training set throughout the entire training, and therefore, the exactness of the training procedure. 
In our probably submodular framework, non-submodular potentials can arise at test-time; we truncate these potentials to recover submodularity prior to the graph-cuts procedure, as described in \cite{rother2005digital}. 

In line with standard practice in segmentation, we report results with respect to two metrics.
The \emph{global} metric counts superpixel-wise accuracy, while the \emph{average} metric counts average per-class superpixel-wise accuracies. The latter metric is more informative, as the background is typically much more prevalent than the foreground. 
We have employed both variants in the construction of the structured output loss function, $\Delta(y_i, y)$, and have trained different models that have optimized each. 

For each method and loss metric an initial training with $1/5$ fraction of the training data set aside as validation data to set the SSVM regularization parameter $C$. 
This is done in order to remove the dependence on the regularization parameter for comparing the performance between methods. 
$C$ is chosen as the best performing one on the validation set among five values logarithmically spaced between $0.1$ and $10$. We set the stopping criterion of the SSVM as a threshold of $\mathit{tol} = 0.001$ on the value of the dual gap relative to the objective. 
For this criterion all methods converged within a similar number of cutting plane iterations $126\pm21$. 
\begin{figure}[ht]
    \centering
    \subfigure[Accuracies for models trained to optimize the sum of pixel errors over all training images]{
      \includegraphics[width=0.9\linewidth]{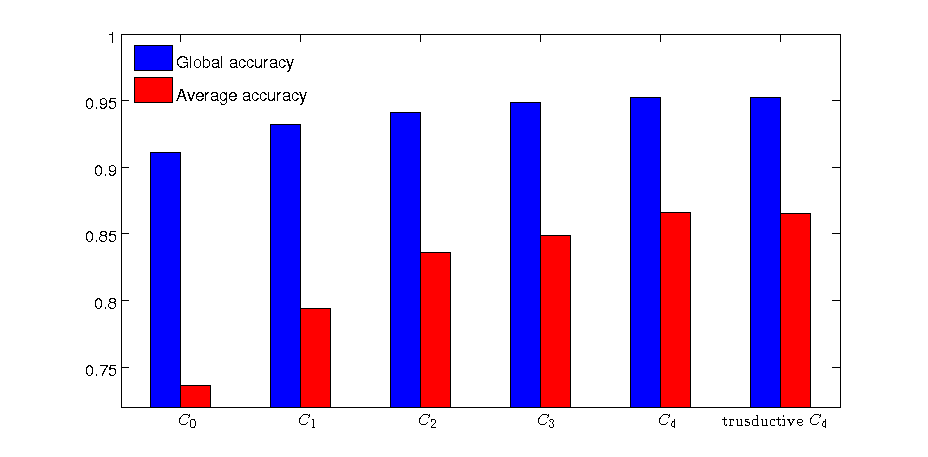}
    }
    \subfigure[Accuracies for models trained to optimize the average per-class pixel accuracies.]{
      \includegraphics[width=0.9\linewidth]{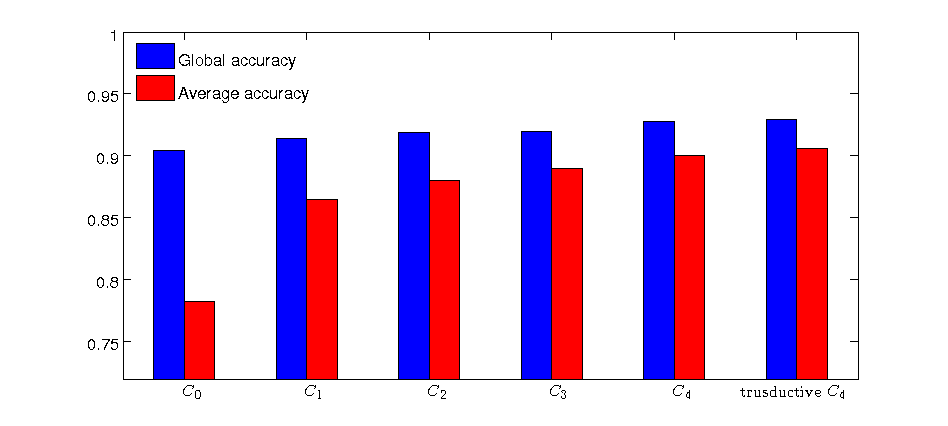}
    }
      \caption{Comparison of results of the binary segmentation of the TU Darmstadt Database of cows. Models have increased capacity as the plot moves from left to right.}
      \label{plot:accuracy}
\end{figure}

\begin{figure*}[ht]
\centering
\subfigure[Original image.]{
      \includegraphics[width=0.23\textwidth]{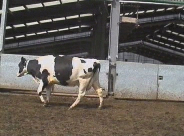}
}
\subfigure[Ground truth.]{
      \includegraphics[width=0.23\textwidth]{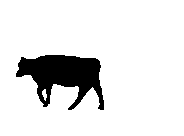}
}
\subfigure[SVM ($\C_0$).]{
      \includegraphics[width=0.23\textwidth]{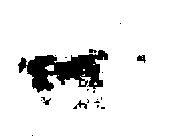}
}
\subfigure[\citet{SzummerKH08} ($\C_1$).]{
      \includegraphics[width=0.23\textwidth]{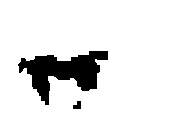}
}
\subfigure[Necessary and sufficient definitely submodular ($\Co$).]{
      \includegraphics[width=0.23\textwidth]{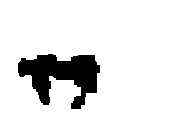}
}
\subfigure[Probably submodular ($\C_3$).]{
      \includegraphics[width=0.23\textwidth]{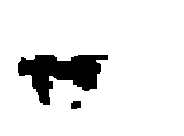}
}
\subfigure[Probably submodular ($\Cf$).]{
      \includegraphics[width=0.23\textwidth]{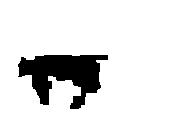}
}
\subfigure[Probably submodular with transductive constraints (\Ctrans{}).]{
      \includegraphics[width=0.23\textwidth]{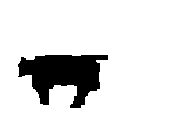}
}
\caption{Example segmentations from methods trained to optimize each of the constraint sets considered here.  As we move from (a) to (e) we increase model capacity and substantially increase the accuracy of the resulting segmentation.
.}\label{fig:ExampleSegmentations}
\end{figure*}

Results are presented in Figure~\ref{plot:accuracy}. In addition to the four constraint sets described in Section~\ref{sec:model}, we consider two additional constraint sets in our experiments. The constraint set 
\begin{equation} \label{eq:constSVM}
\C_0 = \{ \w \ | \ \w_{\alpha\alpha} = \w_{\beta\beta} = \w_{\alpha\beta} = \w_{\beta\alpha} = \mathbf{0} \}
\end{equation}
represents a simple SVM, with no effective pairwise potentials, which we cast as a special case of our learning framework. The constraint set denoted as \Ctrans{} corresponds to a variant of the probably submodular model where, in addition to the submodularity constraints on the training set, the training procedure enforces that the pairwise potentials of the test set (not observing the test labels) are guaranteed to be submodular.  This setting falls under the general definition of transductive learning, in which the test points are known during the training procedure \citep[Chapter 8]{Vapnik1998}.

We have that $\C_0 \subset \C_1 \subset \Co \subseteq \C_3 \subseteq \Cf$. As expected, this gain in model expressiveness translates into a gain of performance as we go from smaller to bigger acceptable optimization domains. 
These performance gains have been verified to be statistically significant. 
A t-test yields a $p$-value less than $10^{-5}$ for a comparison of the methods resulting from optimizing the SSVM subject to $\w \in \C_1$ vs.\ $\w\in \Cf$ both for the \emph{global} and \emph{average} metrics.

Table~\ref{constraints} presents the number of active hard constraints for every method. Hard constraints that are active at convergence indicate that the data term in the optimization objective is pushing the vector $\w$ towards a solution outside of the constraint set $\mathcal{C}_i$ of the optimization domain. We see that as we go to broader optimization domains, from $\C_0$ to $\C_f$, the hard constraints tend to be more often inactive, therefore putting less stringent constraints on the learned model.

\begin{table}[ht]
\centering
\begin{tabular}{@{}llcc@{}}
\toprule
\hspace{1em} & & \bfseries \shortstack{Active  \\ constraints}  & \bfseries \shortstack{All  \\ constraints} \\ \midrule
\multicolumn{3}{l}{\small \textsc{Definitely submodular models}} \\ 
&$\C_0$ & 326 & 415 \\
&$\C_1$ & 301 & 426 \\ 
&$\Co$ & 256 & 405 \\ 
&\Ctrans & 126 & 814 \\ 
\midrule
\multicolumn{3}{l}{\small \textsc{Probably submodular models}} \\ 
&$\C_3$ & 999 & 1609 \\
&$\Cf$ & 116 & 699 \\ 
\bottomrule
\end{tabular}
\caption{\emph{Active constraints}: number of active hard constraints after convergence. \emph{All constraints}: number of unique hard constraints introduced in the QP-solver at any point over the course of the SSVM optimization.}
\label{constraints}
\end{table}

Probably submodular models trade off inference error with model expressivity. Table~\ref{tradeoff} shows the percentage of pairwise constraints that are non-submodular in the test set.
Figure~\ref{fig:ExampleSegmentations} gives examples of segmentations predicted by the different constraint sets, including the method of~\citet{SzummerKH08} corresponding to the set $\C_1$, and the probably submodular constraint sets $\C_3$ and $\Cf$.

\begin{table}[ht]
\centering
\begin{tabular}{@{}ccc@{}}
\toprule
       & \multicolumn{2}{c}{\textbf{Non-submodular potentials}} \\
    \cmidrule(lr){1-2}
    \cmidrule(lr){2-3}
 & cows dataset & feline retinal dataset \\
       \midrule
$\C_3$ & $5\cdot 10^{-2}$ &   $4\cdot 10^{-4}$                            \\
$\Cf$  & 0 &  $1\cdot 10^{-2}$      \\ \bottomrule
\end{tabular}
    \caption{Fraction of non-submodular potentials measured on the test set for probably submodular models.
For definitely submodular models, and for the transductive probably submodular set \Ctrans{}, this number is always zero.}
    \label{tradeoff}
\end{table}

\begin{table*}[ht]\small
\centering
\caption{Training efficiency gains with/without delayed constraint generation and 2-stage weights pretraining}
\label{tbl:efficiency}
\begin{tabular}{@{}lcccc@{}}
\toprule
method       & \# margins computed & SSVM iterations & total constraint gen. time & total training time \\ \midrule
def. submodular &       &   $296$   &         &     $607\,\mathrm{s}$    \\
1-pass, full & $102.5\cdot 10^6$  & $581$ &   $400\,\mathrm{s}$      &    $593\,\mathrm{s}$    \\
1-pass, delayed    &    $\hphantom{1}67.9 \cdot 10^6$     &    $581$      &  $329\,\mathrm{s}$        &    $652\,\mathrm{s}$    \\

2-pass delayed          &    $\hphantom{10}6.5\cdot 10^{6}$   &   658           &       $\hphantom{1}41\,\mathrm{s}$     &   $555\,\mathrm{s}$      \\ \bottomrule
\end{tabular}
\end{table*}

\begin{figure*}[ht]
\centering
\includegraphics[width=0.98\textwidth]{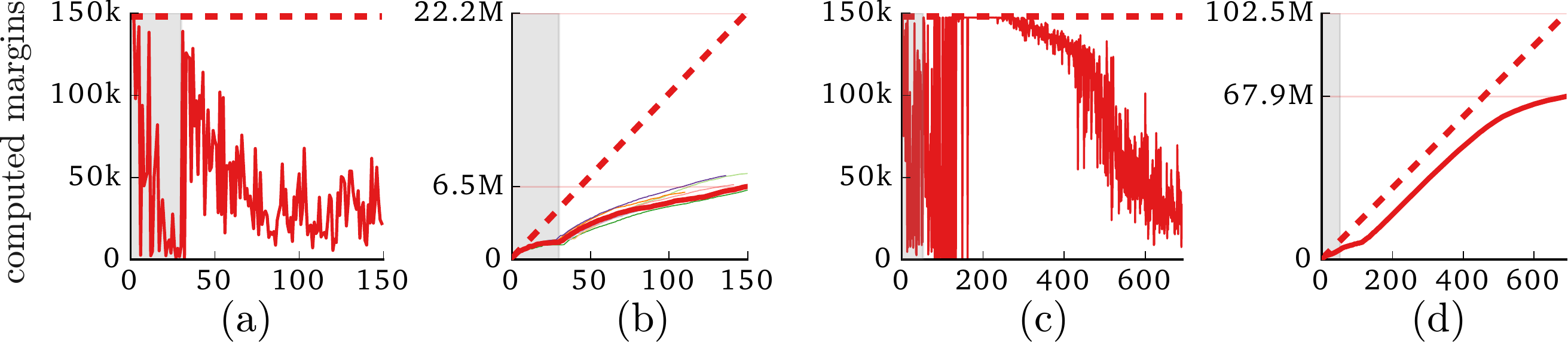}\vspace{-0.6em}
\caption{Number of constraint margins computed with (solid) or without (dashed) delayed constraint generation.  The horizontal axis is the number of cutting plane iterations (i.e.\ the numer of times passing through the outer loop of Algorithms~\ref{alg:QP_naive} or~\ref{alg:QP_fast}). (a): per call to the QP solver, (b): running total. (a) shows only one fold of the dataset, (b) also shows other folds, which have similar behavior. Shaded area: first SSVM iteration. (c), (d): margins computation without inexact pretraining: in this case, the delayed constraints approach saves a substantial amount of computation.~\label{compute_gains}}
\end{figure*}

We do not observe a difference in performance between $\Cf$ and \Ctrans{}, which corroborates our observation made in section~\ref{sec:SampleComplexity} that enforcing the constraints on the training set is enough to enforce with high probability the submodularity of the potentials at test-time. 
This also shows that the inexactness of the truncated graph-cuts procedure, due to the submodular edges arising at test-time after probably submodular training with $\Cf$, plays negligible role, as is expected given the small portion of effective non-submodular potentials on the test-set observed in Table~\ref{tradeoff}.

\paragraph{Evaluation of the constraint generation efficiency}

Table~\ref{tbl:efficiency} shows the improvements in computational efficiency of our delayed constraint generation scheme and the 2-stage training, for one fold of the data. 
Figure~\ref{compute_gains} shows the number of computed constraint margins. Combining inexact pretraining and delayed constraint generation limits the number of computed margins. In the first SSVM iteration, many hard constraints have to be added 
to make $\w$ satisfy the constraints $\Cf$; in subsequent iterations, the number of added constraints per iteration becomes small (1 or 2).

\begin{figure*}[ht]
\centering
\subfigure[Ground truth]{
      \includegraphics[width=0.23\textwidth]{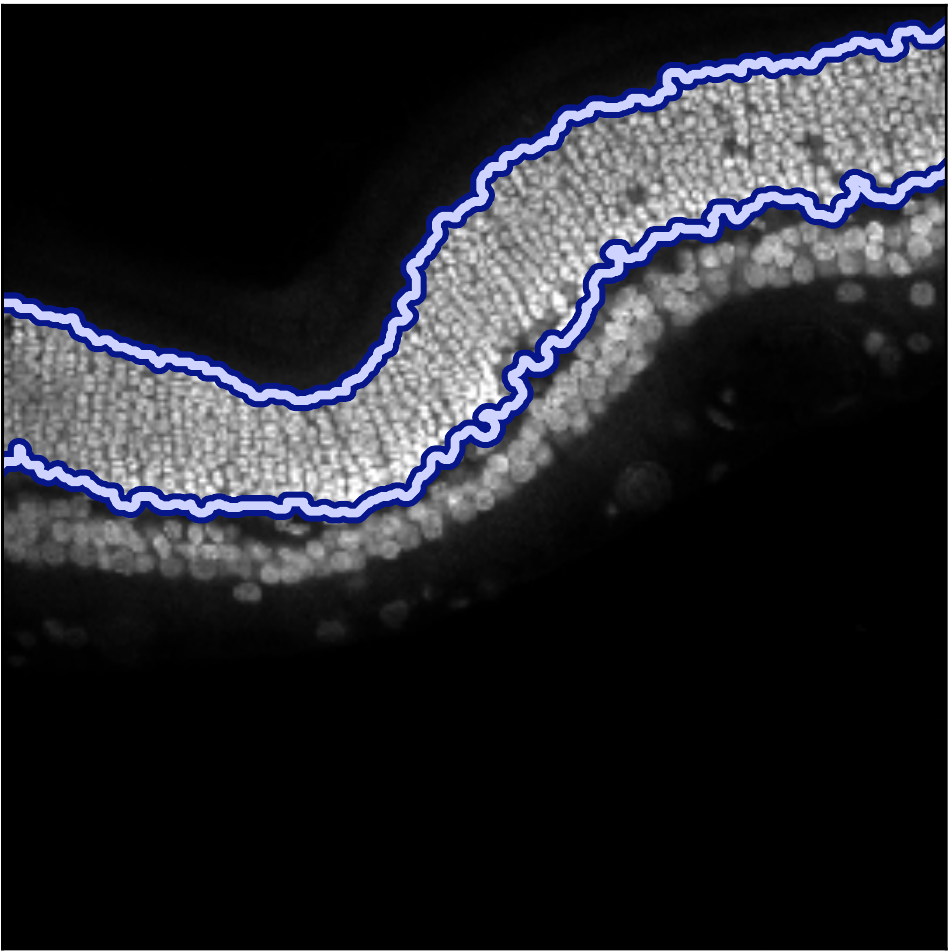}
}
\subfigure[$\C_0$]{
      \includegraphics[width=0.23\textwidth]{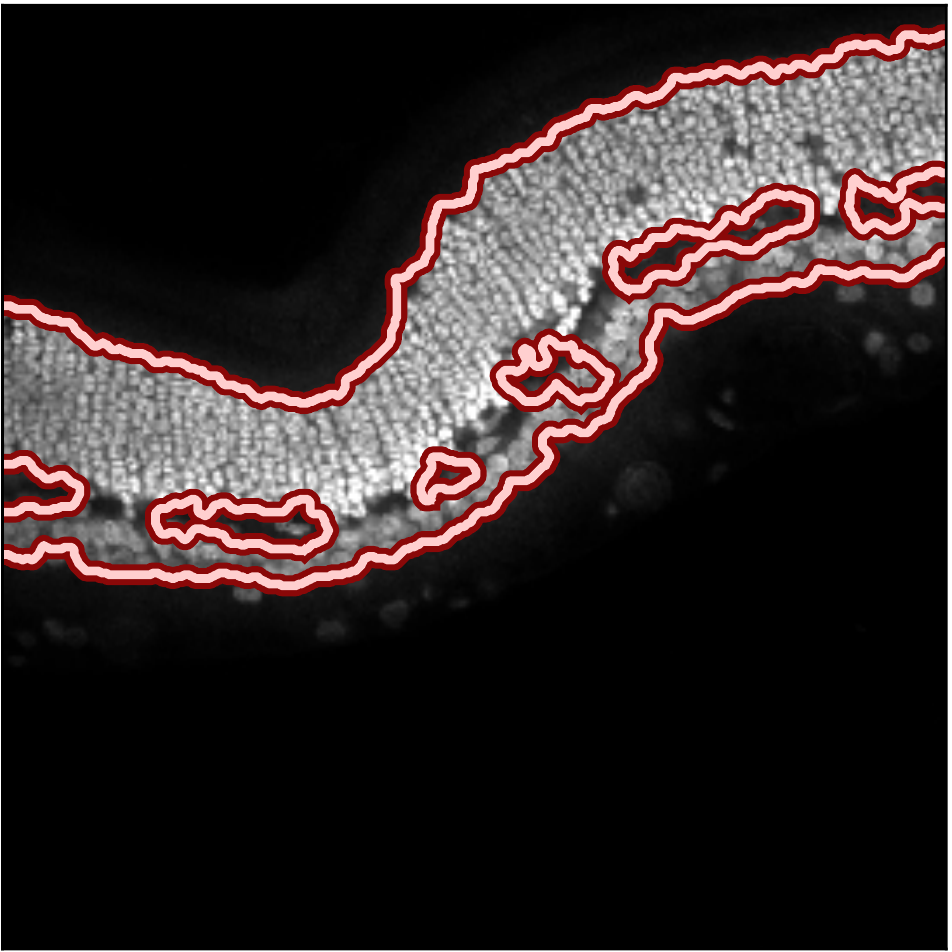}
}
\subfigure[$\C_1$]{
      \includegraphics[width=0.23\textwidth]{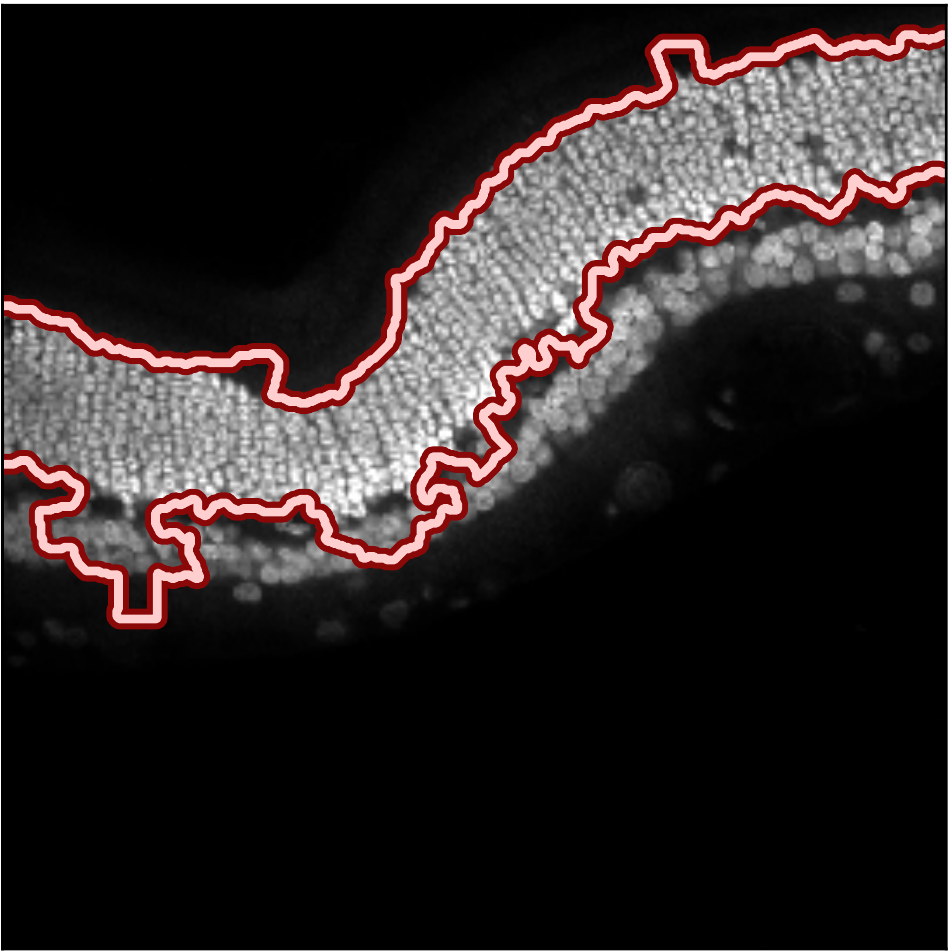}
}
\subfigure[$\Co$]{
      \includegraphics[width=0.23\textwidth]{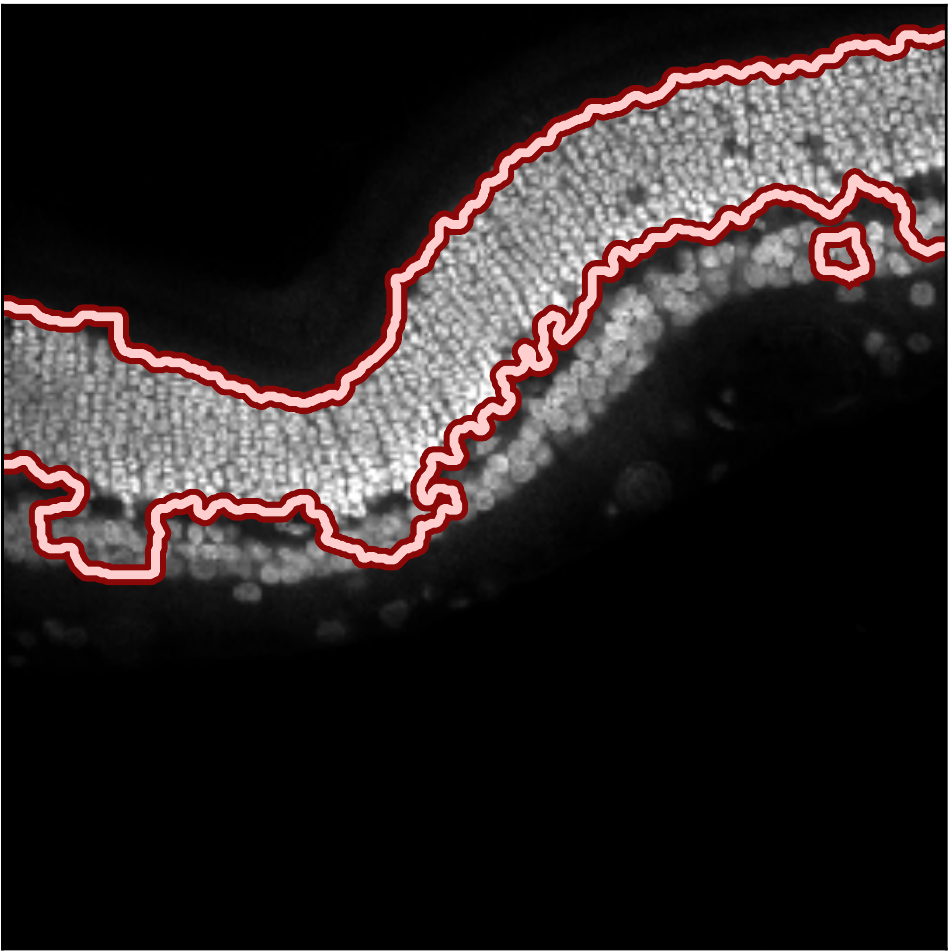}
}
\subfigure[$\C_3$]{
      \includegraphics[width=0.23\textwidth]{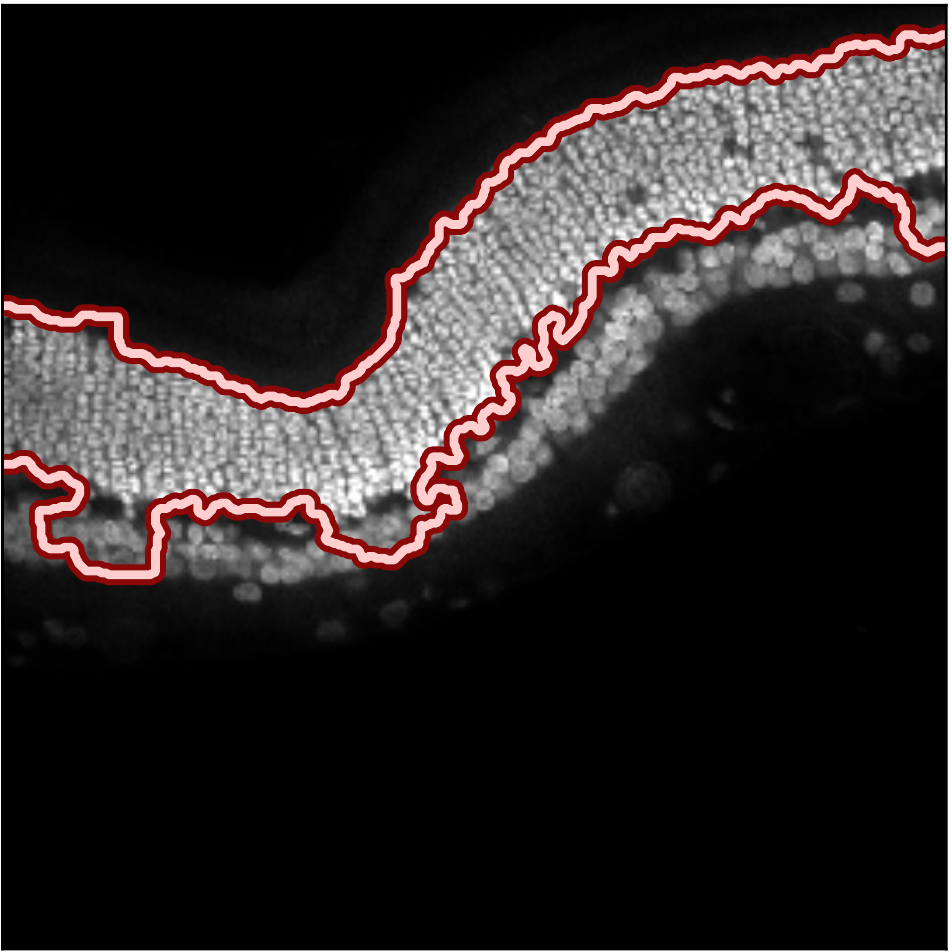}
}
\subfigure[$\Cf$]{
      \includegraphics[width=0.23\textwidth]{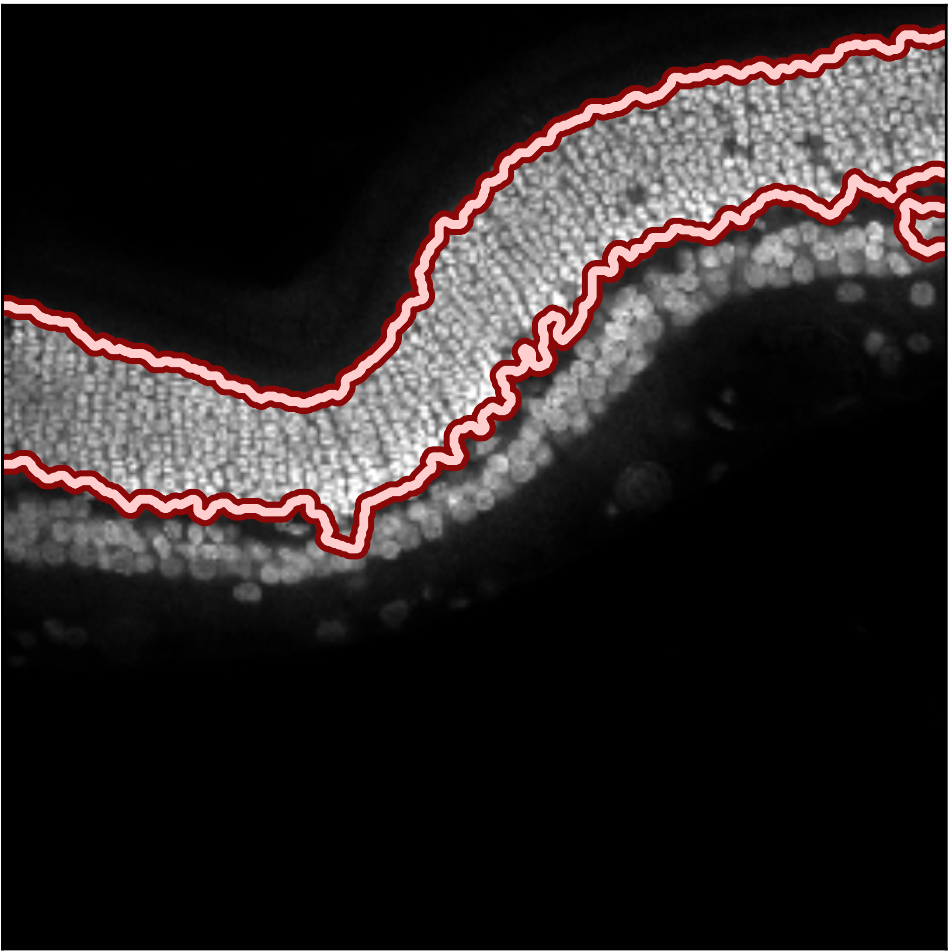}
}
\subfigure[\Ctrans{}]{
      \includegraphics[width=0.23\textwidth]{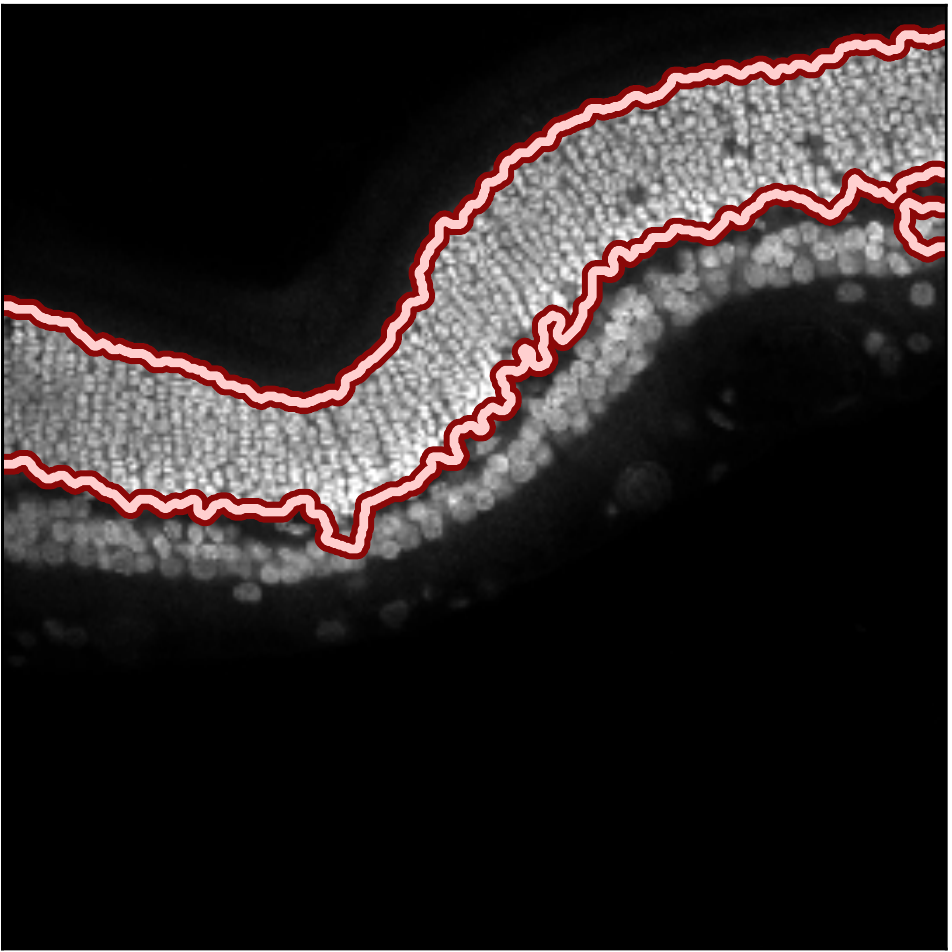}
}
\caption{Contours of segmentations obtaining using the different constraint sets on a feline retinal image. The use of larger constraint sets leads to a better performance of the model; moreover the transductive probably submodular constraint set \Ctrans{} does not perform better than the probably submodular set $\Cf$.}\label{fig:FelineSegmentations}
\end{figure*}

\begin{table*}[ht]
\centering
\caption{Test performances and Jaccard index of the segmentations obtained on the UCSB retinal dataset. 
The values are averaged across the $40$ random folds.
VOC is the VOC score (or mean IoU).}
\label{table:feline}
\begin{tabular}{@{}lcccccccc@{}}
\toprule
 & $\C_0$ & $\C_1$ & $\Co$ & $\C_3$ & $\Cf$ & \Ctrans{} 
 & QPBO \\ 
\midrule
$1-\Delta_{\text{avg}} (\%)$ &   93.1     &   93.5  &  95.7 &     95.7   &  \textbf{96.6}  &  \textbf{96.6}  & 92.4 \\
IoU (\%) &  72.8  &  78.5  &  85.0 &  85.2  &  \textbf{87.2}  &   \textbf{87.2}  & 79.7 \\
VOC (\%) &  73.3  &  80.0  &  86.4 &  86.7  &  \textbf{88.5}  &   88.4 & 81.7
\\\bottomrule
\end{tabular}
\end{table*}

\subsection{Segmentation photon receptor cells in 2D retinal images}\label{sec:ApplicationPhotonReceptorCells2Dretinal}
We evaluate our approach on the publicly available UCSB retinal dataset, which has 50 greyscale laser scanning confocal images of normal and 3-day detached feline retinas \citep{gelasca2008evaluation}.
As in subsection~\ref{sec:toyseg}, we first oversegment the images into $\sim 600$ superpixels using the SLIC algorithm.
As in a previous work by \citet{lucchi2015learning} that used structured prediction on this dataset, we use a concatenation of 10-bin intensity histograms and $8\times 8$ grey level co-occurrence matrix (GLCM) with one-pixel displacement \citep{haralick1973textural} as unary features for each of the superpixels. This yields a 74-dimensional unary feature vector.
We set the pairwise features between adjacent superpixels as the vector of absolute difference between the corresponding unary features. 
We train the different methods using the per-class averaged accuracy loss $\Delta_{\text{avg}}$. 
We report the results over $40$ random folds, each fold containing $40$ training images $10$ test images.
For each method, the regularization parameter $C$ is chosen by cross-validation on the training set of each fold. 
We set the relative tolerance of the cutting-plane SSVM to $\mathit{tol} = 10^{-4}$.
We report the resulting accuracy using the class-averaged accuracy over all superpixels in the test set -- which is the training objective and therefore accurately represents the gain in capacity of the probably submodular framework. We also report the mean per-image pixel-wise Jaccard index -- or intersection-over-union metric
\begin{equation}\label{def_jaccard}
\text{IoU} = \frac{
\text{True Positive}}{
\text{True Pos.} + \text{False Pos.} + \text{False Negative}}
\end{equation}
commonly used in image segmentation.
additionally, we report the VOC score, which is the mean between the IoU of the foreground class and the IoU of the background class, and was used in \cite{lucchi2015learning} in place of the IoU of the foreground class alone~\citep{Cheng2017VolumeSU,Casser2018FastMS}. 
These results are summarized in Table~\ref{table:feline}, averaging the performance across folds on the test set. 
We observe that the intermediate probably constraint set $\C_3$ does not provide significant gains over the definitively submodular set $\Co$, indicating that this set of constraints effectively degenerates to the definitively submodular set of constraints. 
The full-fledged probably submodular approach under the constraint set $\Cf$ does nevertheless lead to performance gains over $\Co$, significant under a paired Student's t-test with a $5\%$ acceptance level. 
As before we see that the transductive probably submodular constraint set \Ctrans{} does not lead to significant improvements over $\Cf$, validating our approach and leading to the conclusion that the geometry of the dataset necessary for efficient inference is accurately captured by the training set alone. 
Table~\ref{tradeoff} indicate that only a small amount of edges are non-submodular at test time for the two probably submodular constraint sets, indicating that the truncated graph cut inference is very close to being exact.

We also compare with using an approximate solver for the SSVM learning, without submodularity constraints: we use QPBO, which corresponds to the extended roof duality solver of \citep{rother2007optimizing}. 
We see that the QPBO solver converges to a suboptimal solution, with a lower test accuracy than the SSVM optimized with definitely submodular ($\Co$) or probably submodular ($\Cf$) constraints.

Figure~\ref{fig:FelineSegmentations} shows an example of a segmented retinal image and highlight the gains obtained by training the model with more expressive constraint sets.

\subsection{Mitochondria
segmentation from 3D electron microscopy}\label{sec:ApplicationMitochondria}
\begin{figure*}[ht]
\centering
\subfigure[Ground truth]{
      \includegraphics[width=0.23\textwidth]{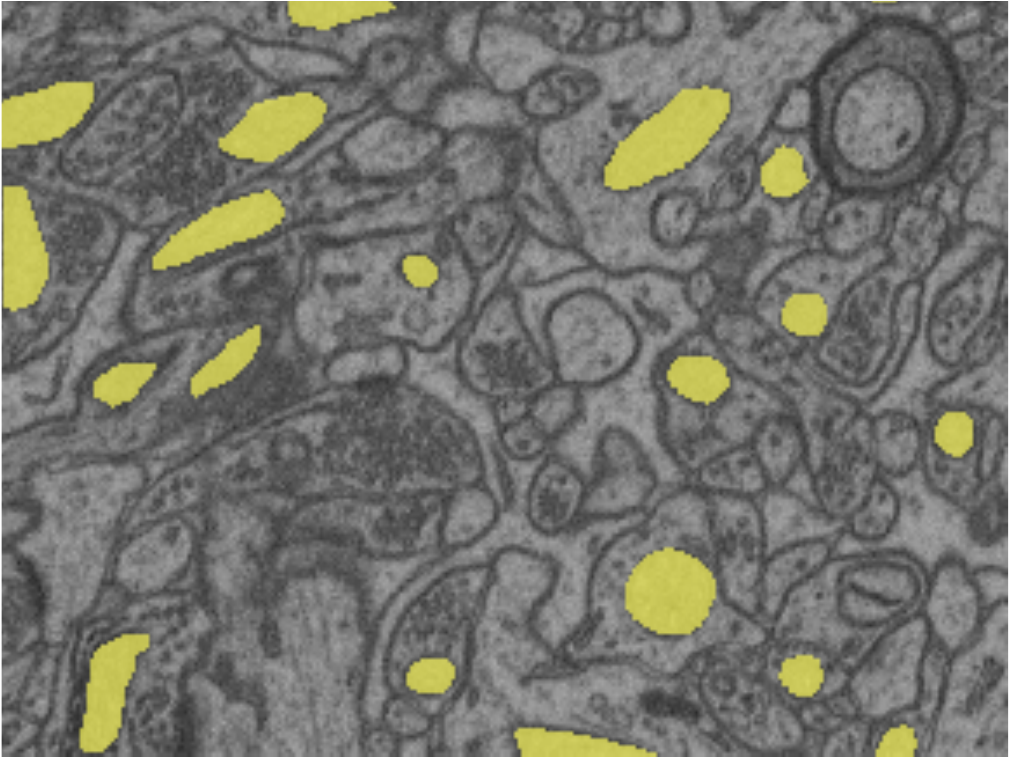}
}
\subfigure[$\C_0$]{
      \includegraphics[width=0.23\textwidth]{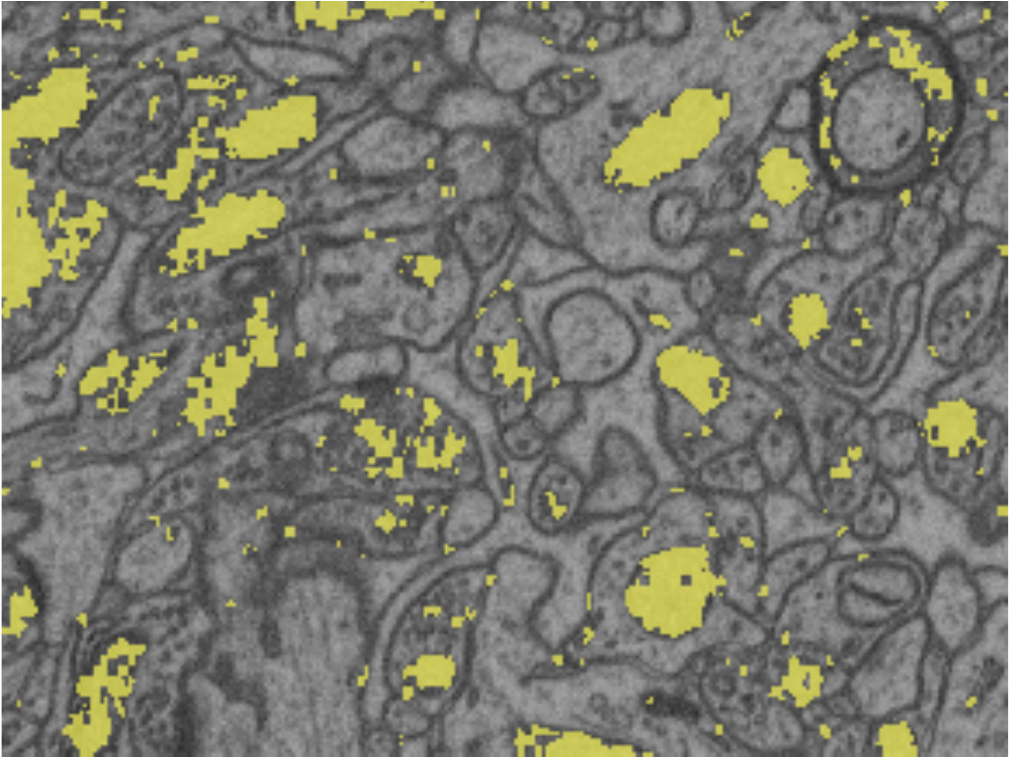}
}
\subfigure[$\C_1$]{
      \includegraphics[width=0.23\textwidth]{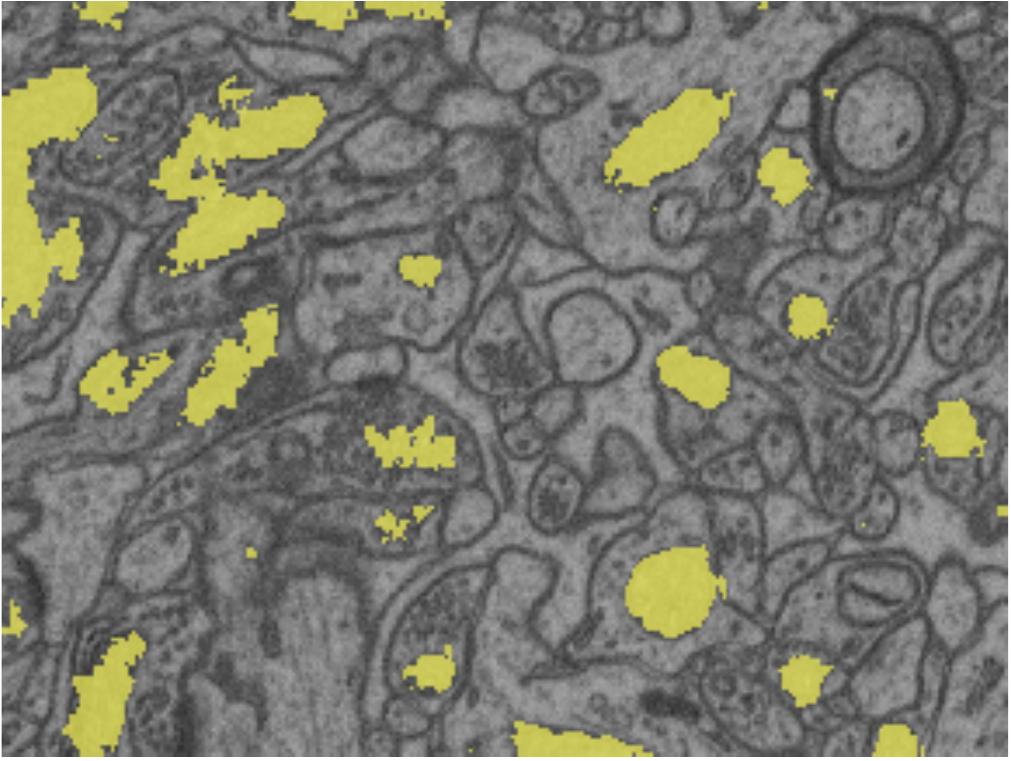}
}
\subfigure[$\Co$]{
      \includegraphics[width=0.23\textwidth]{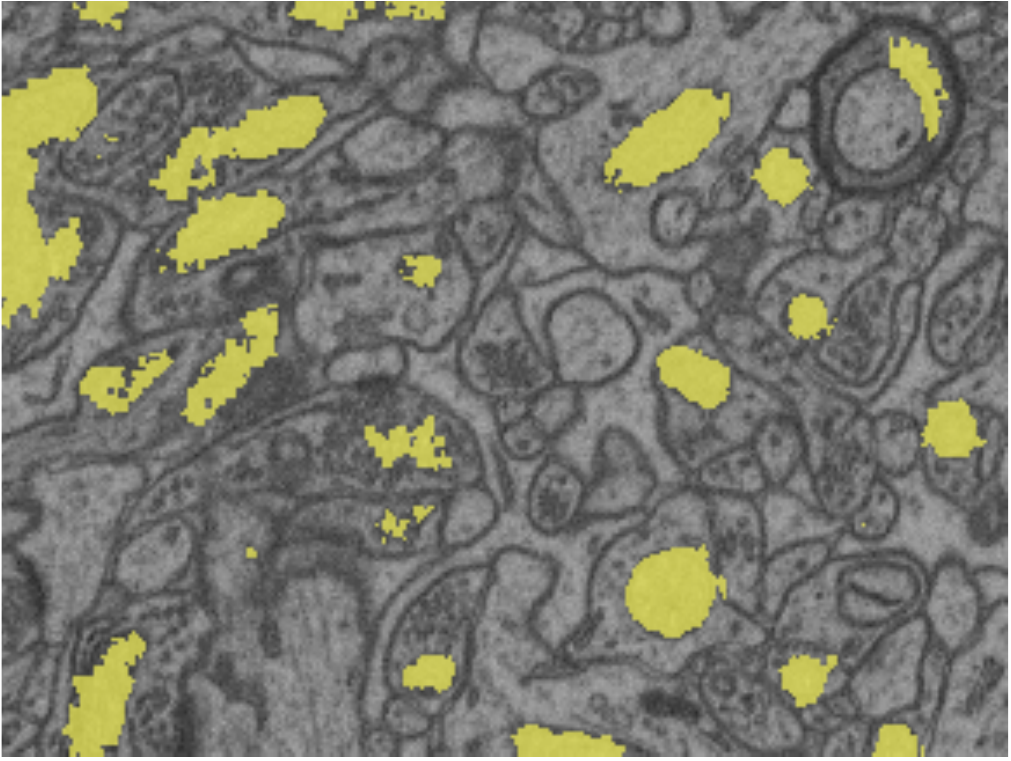}
}
\subfigure[$\C_3$]{
      \includegraphics[width=0.23\textwidth]{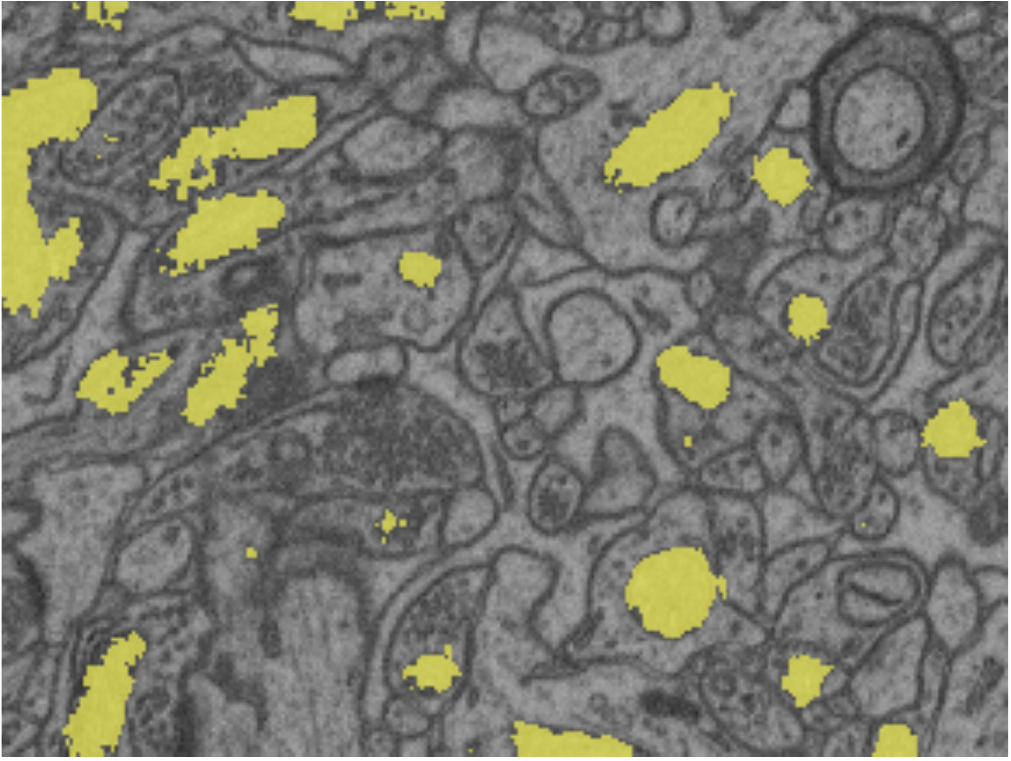}
}
\subfigure[$\Cf$]{
      \includegraphics[width=0.23\textwidth]{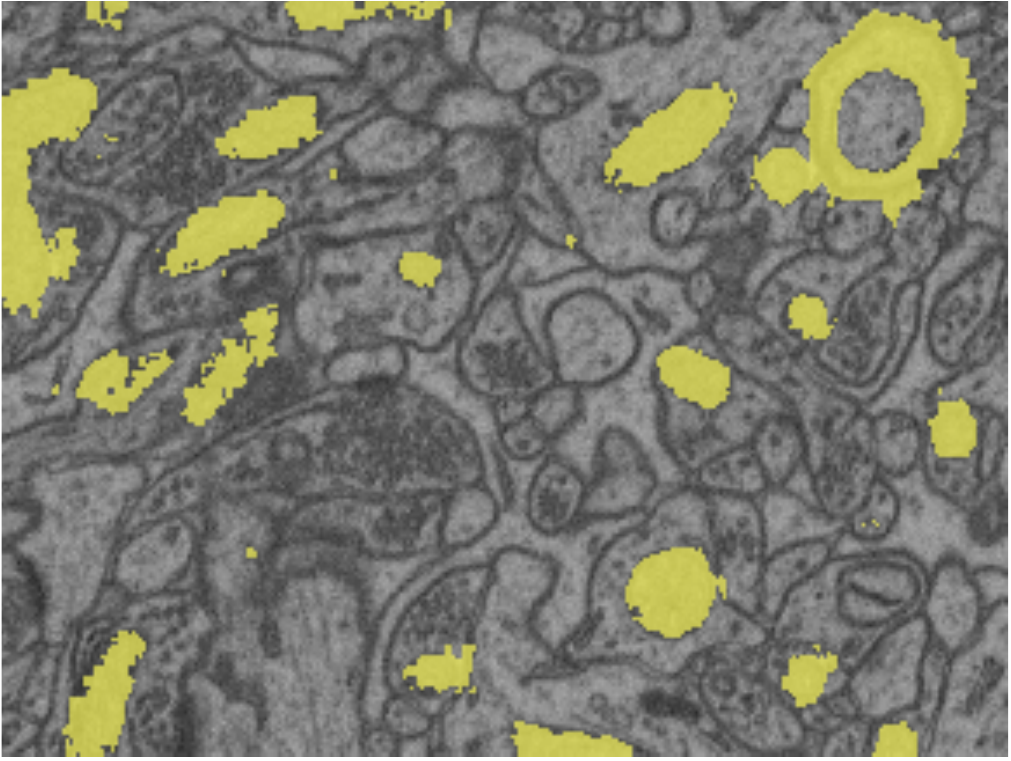}
}
\subfigure[QPBO]{
      \includegraphics[width=0.23\textwidth]{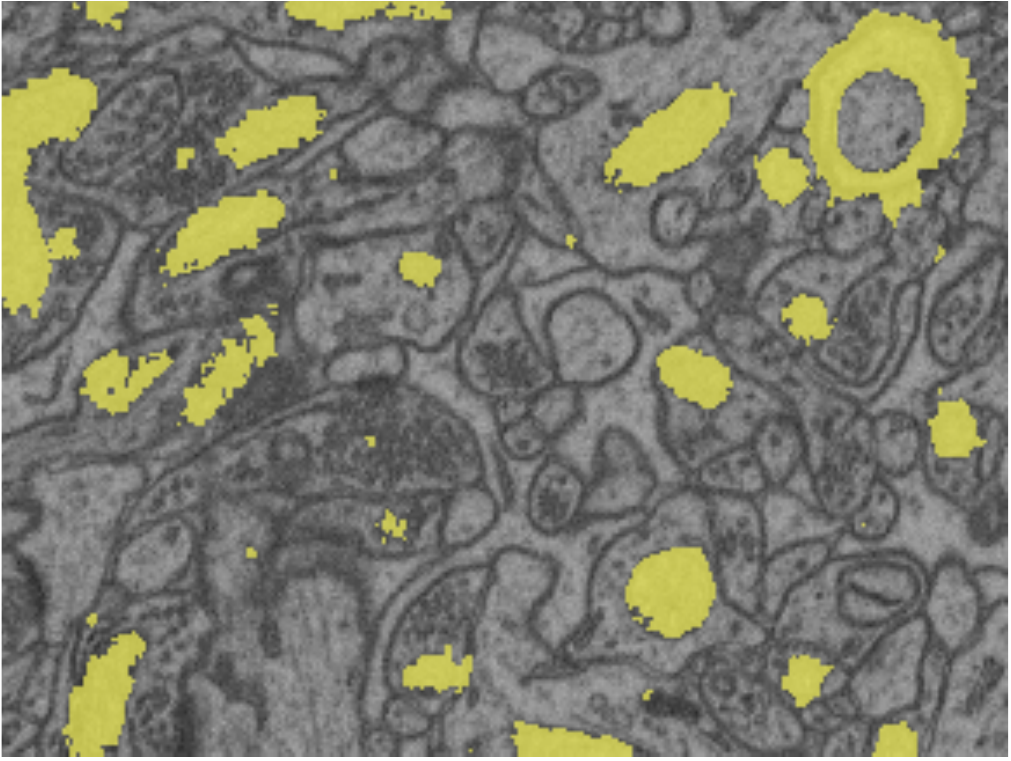}
}
\caption{Slice of the test segmentation volume obtained for the Mitochondria volumetric segmentation task.
.}\label{fig:MitoSegmentations}
\end{figure*}
\begin{table*}[ht]
\centering
\caption{Train and test losses and test performances of the segmentations of the 3D electron microscopy images of mitochondria in the hippocampus.
 ``Oracle'' corresponds to the supervoxel oracle performance. 
 ``SSVM iter.'' reports the number of SSVM iterations of the method.}
\label{table:mito}
\begin{tabular}{@{}lccccccccc@{}}
\toprule
 & $\C_0$ & $\C_1$ & $\Co$ & $\C_3$ & $\Cf$ & \Ctrans{} & TGC & QPBO & Oracle \\
\midrule
$1 - \Delta_{\text{avg}}^{\text{train}} (\%)$ & 87.6  & 96.9 & 97.0 & 97.4 & \textbf{98.0} & \textbf{98.0} & 85.2 & 97.9  &  \\
$1 - \Delta_{\text{avg}}^{\text{test}} (\%)$ &  87.7  & 95.9  &  95.6 & \textbf{96.5}  & 96.2 & 96.1  & 78.8 & 96.0 & 100\\
IoU (\%) &  43.9 & 60.4  & 58.0 & \textbf{64.5} & 56.9 & 56.6 & 51.9 & 58.0 & 100 \\ 
VOC (\%) &  69.1  &  78.5  & 77.1 & \textbf{80.8} & 76.5 & 76.3 & 74.5 & 77.1 &  90.05\\
\midrule
SSVM iter. & 369 & 1082 & 1303 & 976 & 3981 & 3322 & 25 & 5000 \\
\bottomrule
\end{tabular}
\end{table*}

We perform  for structured learning in a publicly available 3D electron microscopy image taken from the CA1 hippocampus region of the brain~\citep{lucchi2012supervoxel}, with annotation of the mitochondria. This dataset has been shown to benefit from structured learning over unary voxel-based non-structured approaches~\citep{lucchi2015learning}, and consists of two greyscale volumes containing $1024 \times 768 \times 165$ voxels, one being used for training and one being used for testing.
We perform initial supervoxel segmentation with $\sim 130\text{K}$ supervoxels with compacity $m = 20$, after initial gaussian smoothing of the image volumes with $\sigma = 5$. To cross-validate the regularization parameter of the different methods, we split the training volume into two $512 \times 768 \times 165$ sub-volumes and perform validation of the $C$ among $11$ values logarithmically distributed between $10^{-2}$ and $10^3$ for each method considered. 
For each superpixel, we extract unary features as the concatenation of $40$-bin intensity histograms, a $8\times 8$ grey-level co-occurence matrix with $1$ pixel displacement in the three dimensions, and a bias channel of constant value $1$. 
We use the absolute difference between unary features as pairwise features. 
The SSVM is optimized with a per-class average loss, with a final bound of $10^{-3}$ on the primal-dual cutting-plane relative optimization gap and a maximum number of 5000 iterations.

We report the accuracies of the SSVM prediction on the train and test set after optimization with the different constraint sets considered, in terms of the test set superpixel loss $\Delta_{\text{avg}}$ 
in Table~\ref{table:mito}. 
In addition, we compare with the QPBO approximate solver (as in Section \ref{sec:ApplicationPhotonReceptorCells2Dretinal}), as well as truncated Graph-Cuts with no submodularity constraints (TGC). 
We also report the pixelwise IoU and VOC score on the test set, as well as the supervoxel oracle performance $\mathcal{O}$, which is the best performance one may obtain using these supervoxels.

As before, we see an improvement associated to the use of larger optimization domains for the model parameter. 
Only a small fraction $3.5\cdot 10^{-05}$ of non-sub\-mod\-ular edges are present on the test set after optimization under the probably submodular constraint set $\Cf$.
While the training accuracy (objective) is increasing when relaxing constraints from $\C_0$ to $\Cf$ as expected, we notice that in this case the test accuracy on $\C_3$ is greater than the test accuracy on $\Cf$, especially in terms of IoU and VOC score. 
In this case, we believe that the added constraints in $\C_3$ can have a regularizing effect that helps the generalization of the model. 

While using a truncated graph-cut solver leads to subpar performance on this problem, we see that the QPBO solver reaches competitive accuracies. 
However, it is also the only method that does not converge within 5000 SSVM iterations. 
Inexact algorithms can lead to a poor estimation of the primal-dual optimization gap, on which our termination criterion is founded. 

A visualization of a slice of the segmentation volumes resulting from the different methods is given in Figure~\ref{fig:MitoSegmentations}.

\subsection{Multi-label classification datasets}\label{sec:ExperimentsMultiLabelClassification}
\begin{table*}
\centering
\caption{Train and test accuracies for the multi-label classification datasets, with standard error bars. \label{table:multilabel}}
\begin{tabular}{@{}clccccccc@{}}
\toprule
dataset & & $\C_0$ & $\Co$ & $\Cf$ & TGC & QPBO & TRW-S & (Fin--Best) \\
\midrule
\multirow{2}{*}{\texttt{yeast}} & $1 - \Delta^{\text{train}} (\%)$ & 69.8 $\scriptstyle \pm {.5}$ & \textbf{80.7} $\scriptstyle \pm {.4}$ & \textbf{80.7} $\scriptstyle \pm {.4}$ & 72.6 $\scriptstyle \pm {.6}$ & 67.7 $\scriptstyle \pm {.6}$ & 70.5 $\scriptstyle \pm {.6}$ & \\
 & $1 - \Delta^{\text{test}} (\%)$ & 69.6 $\scriptstyle \pm {.6}$ & \textbf{80.0} $\scriptstyle \pm {.4}$ & \textbf{80.0} $\scriptstyle \pm {.4}$ & 71.8 $\scriptstyle \pm {.6}$ & 66.9 $\scriptstyle \pm {.6}$ & 69.6 $\scriptstyle \pm {.6}$ & 79.8 $\scriptstyle \pm {.5}$  \\
 
\multirow{2}{*}{\texttt{scene}} & $1 - \Delta^{\text{train}} (\%)$ & 85.5 $\scriptstyle \pm {.4}$  & 93.7 $\scriptstyle \pm {.3}$  & \textbf{94.3} $\scriptstyle \pm {.2}$  & 83.8 $\scriptstyle \pm {.5}$  & 74.1 $\scriptstyle \pm {.5}$  & 74.2 $\scriptstyle \pm {.3}$ \\
 & $1 - \Delta^{\text{test}} (\%)$ & 84.5 $\scriptstyle \pm {.4}$ & 90.2 $\scriptstyle \pm {.3}$ & \textbf{90.4} $\scriptstyle \pm {.3}$& 83.3 $\scriptstyle \pm {.4}$ & 72.8 $\scriptstyle \pm {.5}$ & 72.8 $\scriptstyle \pm {.5}$ & 89.9 $\scriptstyle \pm {.3}$\\

\bottomrule
\end{tabular}
\end{table*}

We evaluate our structured prediction learning framework in multi-label classification settings, as described in Section~\ref{sec:MultiLabelClassificationApplicationSetting}. We use the $\texttt{yeast}$~\citep{Elisseeff2001AKM} dataset, which has $1500$ training and $917$ test instances with $d=103$ attributes and $|\mathcal{C}|=103$ classes, and the $\texttt{scene}$~\citep{Gjorgjevikj2011TwoSC} dataset, which has $1211$ train and $1196$ test instances with $d=294$ attributes and $|\mathcal{C}|=6$ classes. 
For each input $\vec{x}$, we construct the edge features $\vec{R}(\vec{x})$ (Equation \eqref{eqmultilabelR}) by a dimensionality reduction of $\vec{x}$. 
Specifically, we transform $\vec{x}$ into an 20-dimensional PCA reduction $\vec{\tilde{x}}$ learned on the training set, and set $\vec{R}(\vec{x})$ to be the concatenation of the positive and negative part of $\vec{\tilde{x}}$:
\begin{equation}
    $\vec{R}(\vec{x})$=\max(\vec{\tilde{x}}, 0) \oplus \max(-\vec{\tilde{x}}, 0),
\end{equation}
satisfying the feature positivity assumption of our constrained SSVM framework.

Table~\ref{table:multilabel} compares the resulting accuracies after training with the Hamming loss, using a SSVM regularization parameter $C=0.1$ and a final bound on the relative optimization gap of $10^{-2}$ and a maximum number of $200$ iterations. 
$\mathcal{C}_0$ is an SVM trained with a unary-only solver. 
$\mathcal{C}_2$ and $\mathcal{C}_4$ corresponds to our submodular and probably submodular constraint set, leading to an exact graph-cut inference.  
We also report the accuracies of models learned without submodular constraints using approximate inference algorithms, namely truncated graph-cuts (TGC), QPBO and the TRW-S tree-reweighted algorithm of \cite{Kolmogorov2005ConvergentTM}. 
All methods converge in less than $200$ SSVM optimization iterations, except the TRW-S-based model on the \texttt{yeast} dataset.

Approximate algorithms leads to bad solutions on these multi-label classification problems -- sometimes worse than the accuracy of $\C_0$ which uses unaries alone. 
This exemplifies the need for exact inference algorithms in structured prediction learning problems, and the merit of extending the relaxing the constraints of exact models through the probably submodular framework. 

Finally, we report accuracies obtained by \citep{Finley2008TrainingSS}, which also uses a SSVM learning setting, only with simple indicator edge features corresponding to $\vec{R}(\vec{x}) = 1$, (where the probably submodular and the definitely submodular constraints sets are the same).
We report their best reported accuracies 
among various inference algorithms (Fin--Best).
While the error bars of our method ($\mathcal{C}_4$) overlap w.r.t.\ \citet{Finley2008TrainingSS}, we note that Fin-Best uses a brute-force exact algorithm, which cannot scale to larger problems, while our algorithm can, thanks to fast graph-cut inference.

\section{Conclusions}
We present the probably submodular framework, which allow to learn more expressive CRF models without sacrificing tractability, through the use of exact graph-cut based inference routines.
This methods leads to more expressive models without paying the price of tractability.
We presented efficient optimization strategies for the probably submodular structured SVM framework.
Although we have presented our approach in the case of exact optimization routines of the learning objective, through the duality certificates of the SSVM cutting-plane optimization, our method can be extended to first-order optimization schemes such as projected sto\-chas\-tic gradient descent.

\renewcommand{\UrlBreaks}{\do\/}
\mathchardef\UrlBreakPenalty=1000\relax

Our algorithms were implemented as additions to the Python module for structured prediction \emph{PyStruct} of~\citet{software:pystruct}. The code and experiments have been made available on the repository \sourcecodeurl .

\FloatBarrier 

\begin{acknowledgements}
This work is partially funded by
Internal Funds KU Leuven and an Amazon Research Award. This work made use of a hardware donation from the Facebook GPU Partnership program, and an NVIDIA GPU donation.
We
acknowledge support from the Research Foundation - Flanders
(FWO) through project number G0A2716N.  This  research  received  funding  from  the
Flemish Government under the Onderzoeksprogramma Artifici\"{e}le Intelligentie
(AI) Vlaanderen programme.
The authors thank Wojciech Zaremba for making this work possible.
\end{acknowledgements}

\bibliographystyle{spbasic}      

\bibliography{bermanbib,blaschkobib,bibliography}

\end{document}